\def\1{\bm{1}}
\def\vx{{\bm{x}}}
\def\vy{{\bm{y}}}
\def\mS{{\bm{S}}}
\def\mT{{\bm{T}}}
\DeclareMathAlphabet{\mathsfit}{\encodingdefault}{\sfdefault}{m}{sl}
\SetMathAlphabet{\mathsfit}{bold}{\encodingdefault}{\sfdefault}{bx}{n}
\def\gT{{\mathcal{T}}}
\def\sP{{\mathbb{P}}}
\def\sQ{{\mathbb{Q}}}
\newcommand{\E}{\mathbb{E}}
\newcommand{\R}{\mathbb{R}}
\DeclareMathOperator*{\argmax}{arg\,max}
\definecolor{mydarkred}{rgb}{0.6,0,0}
\definecolor{mydarkgreen}{rgb}{0,0.6,0}
\definecolor{mydarkblue}{rgb}{0,0,0.6}
\renewcommand{\H}{\mathcal{H}} %
\newcommand{\X}{\mathcal{X}} %
\renewcommand{\P}{\mathbb{P}}
\renewcommand{\R}{\mathbb{R}}
\renewcommand{\E}{\mathbb{E}}
\newcommand{\K}{\mathcal{K}} %
\newcommand{\A}{A}%
\newcommand{\Q}{\mathbb{Q}} %
\DeclareMathOperator{\bigO}{\mathcal{O}}
\DeclareMathOperator{\MMD}{MMD}
\newcommand{\mnstd}[2]{#1{\scriptsize$\pm$#2}}
\newcommand{\nullhyp}{\mathfrak{H}_0}
\newcommand{\althyp}{\mathfrak{H}_1}
\newcommand{\abs}[1]{\lvert #1 \rvert}
\newcommand{\Abs}[1]{\left\lvert #1 \right\rvert}
\newcommand{\norm}[1]{\lVert #1 \rVert}
\newcommand{\httpsurl}[1]{\href{https://#1}{\nolinkurl{#1}}}
\newlist{assumplist}{enumerate}{1}
\setlist[assumplist]{label=(\textbf{\Alph*})}
\Crefname{assumplisti}{Assumption}{Assumptions}
\newlist{netassumplist}{enumerate}{1}
\setlist[netassumplist]{label=(\textbf{\Roman*})}
\Crefname{netassumplisti}{Assumption}{Assumptions}
\newlist{compactitem}{itemize}{3}
\setlist[compactitem]{topsep=0pt,partopsep=0pt,itemsep=4pt,parsep=0pt}
\setlist[compactitem,1]{label=\textbullet}
\setlist[compactitem,2]{label=---}
\setlist[compactitem,3]{label=*}
\newlist{compactdesc}{description}{3}
\setlist[compactdesc]{topsep=0pt,partopsep=0pt,itemsep=4pt,parsep=0pt}
\newcommand{\wk}[2][noinline]{\todo[color=red!30,#1]{Wenkai: #2}}
\newcommand{\feng}[2][noinline]{\todo[color=blue!30,#1]{Feng: #2}}
\newcommand{\dani}[2][noinline]{\todo[color=violet!20,#1]{Dani: #2}}
\title{Meta Two-Sample Testing:\\Learning Kernels for Testing with Limited Data}
\author{%
  Feng Liu\thanks{These authors contributed equally.} \\
  Australian AI Institute, UTS \\
  \texttt{feng.liu@uts.edu.au}
  \And
  Wenkai Xu$^*$ \\
  Gatsby Unit, UCL%
  \thanks{Now at Department of Statistics, University of Oxford.}\\
  \texttt{xwk4813@gmail.com}
  \AND
  Jie Lu \\
  Australian AI Institute, UTS \\
  \texttt{jie.lu@uts.edu.au}
  \And
  Danica J.\ Sutherland\\
  UBC and Amii%
  \thanks{Work done partially while at the Toyota Technological Institute at Chicago.}\\
  \texttt{dsuth@cs.ubc.ca}
}
\newtheorem{theorem}{Theorem} \crefname{theorem}{Theorem}{Theorems}
\newtheorem{lemma}[theorem]{Lemma} \crefname{lemma}{Lemma}{Lemmas}
 \crefname{corollary}{Corollary}{Corollaries}
\newtheorem{definition}[theorem]{Definition} \crefname{definition}{Definition}{Definitions}
\newtheorem{prop}[theorem]{Proposition}  \crefname{prop}{Proposition}{Propositions}
  \crefname{problem}{Problem}{Problems}
\newtheoremstyle{note}%
  {}%
  {}%
  {}%
  {}%
  {\bf}%
  {.}%
  { }%
  {}%
\theoremstyle{note}
  \crefname{remark}{Remark}{Remarks}
\begin{document}

\maketitle
\setcounter{footnote}{0}

\begin{abstract}
Modern kernel-based two-sample tests have shown great success in distinguishing complex, high-dimensional distributions by learning appropriate kernels (or, as a special case, classifiers).
Previous work, however, has assumed that many samples are observed from both of the distributions being distinguished.
In realistic scenarios with very limited numbers of data samples,
it can be challenging to identify a kernel powerful enough to distinguish complex distributions. 
We address this issue by introducing the problem of \emph{meta two-sample testing (M2ST)}, which aims to exploit (abundant) auxiliary data on related tasks to find an algorithm that can quickly identify a powerful test on new target tasks.
We propose two specific algorithms for this task: a generic scheme which improves over baselines,
and a more tailored approach which performs even better.
We provide both theoretical justification and empirical evidence that our proposed 
meta-testing schemes
outperform learning kernel-based tests directly from scarce observations, 
and identify when such schemes will be successful.
\end{abstract}

\section{Introduction}

Two-sample tests ask, ``given samples from each, are these two populations the same?''
For instance, one might wish to know whether a treatment and control group differ.
With very low-dimensional data and/or strong parametric assumptions, methods such as $t$-tests or Kolmogorov-Smirnov tests are widespread.
Recent work in statistics and machine learning has sought tests that cover situations not well-handled by these classic methods \cite{GBRSS12:mmd,GSSSP12:kernel-choice,Jitkrittum2016,sutherland:mmd-opt,Lopez:C2ST,liu2020learning,Kubler2020learning,Szekely2013,Heller2016,RamdasGarciaCuturi,Chen2017,Gao18neurips,Ghoshdastidar2017,Graph_two_sample,cheng2021neural,Matthias:deep-test,kubler2021an,LiW18TIT},
providing tools useful in machine learning for domain adaptation, causal discovery, generative modeling, fairness, adversarial learning, and more \citep{MMD_GAN,Gong2016,DA_app_Stojanov,ODLCMP20:fair-reps,dong2019semantic,dong2020cscl,sun2021what,gao2021maximum,fang2021open,zhong2021how,liu2018accumulating,dong2020cscl,fang2021learning,song2021segment,cano2020kappa,tahmasbi2021driftsurf,tasksen2021sequential}.
Perhaps the most powerful known widely-applicable scheme is based on a kernel method known as the \emph{maximum mean discrepancy} (MMD) \citep{GBRSS12:mmd}  -- or, equivalently \citep{sejdinovic2013equivalence}, the energy distance \citep{Szekely2013} -- when one \emph{learns} an appropriate kernel for the task at hand \citep{sutherland:mmd-opt,liu2020learning}.
Here, one divides the observed data into ``training'' and ``testing'' splits,
identifies a kernel on the training data by maximizing a power criterion $\hat J$,
then runs an MMD test on the testing data (as illustrated in \cref{fig:moti}a).
This method generally works very well when enough data is available for both training and testing.

In real-world scenarios, however, two-sample testing tasks can be challenging if we do not have very many data observations. For example, in medical imaging, we might face two small datasets of lung computed tomography (CT) scans of patients with coronavirus diseases, and wish to know if these patients are affected in different ways. If they are from different distributions, the virus causing the disease may have mutated. Here, previous tests are likely to be relatively ineffective; we cannot learn a powerful kernel to distinguish such complex distributions with only a few observations. 
In this paper, we address this issue by considering a problem setting where {related} testing tasks are available.
We use those related tasks to identify a kernel selection \emph{algorithm}.
Specifically, instead of using a fixed algorithm $A$ to learn a kernel (``maximize $\hat J$ among this class of deep kernels''), we want to learn an algorithm $A_\theta$ from auxiliary data (\cref{fig:moti}b):
\begin{align}\label{eq:learn_A}
    \argmax_{A_\theta} \E_{(\P, \Q) \sim \tau}\left[ \hat{J} (S_{\P}^{te},S_{\Q}^{te}; A_\theta(S_{\P}^{tr},S_{\Q}^{tr})) \right].
\end{align}
Here $\P, \Q$ are distributions sampled from a meta-distribution of related tasks $\tau$, which asks us to distinguish $\P$ from $\Q$.
The corresponding observed sample sets $S_{\P}, S_{\Q}$
are split into training ($tr$) and testing ($te$) components.
$A_\theta$ is a kernel selection algorithm which, given the two training sets (also called ``support sets'' in meta-learning parlance), returns a kernel function.
$\hat{J}$ finally estimates the power of that kernel using the test set (``query sets'').
In analogy with meta-learning \cite{finn2017model,snell2017prototypical,liu2021IPN,liu2019learning,liu2020APNet,TCTS:nce-kernel-means,liu2021a},
we call this learning procedure \emph{meta kernel learning} (Meta-KL).
We can then apply $A_\theta$ to select a kernel on our actual testing task, then finally run an MMD test as before (\cref{fig:moti}b).

\wk{The notation for multiple-kernel-learning in Fig 1(c) is amended to $\sum_i \beta_i k_i$. }
\feng{I have done it!}

The adaptation performed by $A_\theta$, however, might still be very difficult to achieve with few training observations; even the best $A_\theta$ found by a generic adaptation scheme might over-fit to $S_\P^{tr}, S_\Q^{tr}$.
For more stable procedures and, in our experiments, more powerful final tests,
we propose \emph{meta multi-kernel learning} (Meta-MKL). 
This algorithm independently finds the most powerful kernel for each related task;
at adaptation time, we select a convex combination of those kernels for testing (\cref{fig:moti}c),
as in standard multiple kernel learning \citep{GA11:mkl}
and similarly to ensemble methods in few-shot classification \cite{dvornik2020selecting,liu2018taeml}.
Because we are only learning a small number of weights rather than all of the parameters of a deep network,
this adaptation can quickly find a high-quality kernel.

\begin{figure*}[tp]
    \begin{center}
        \includegraphics[width=\textwidth]{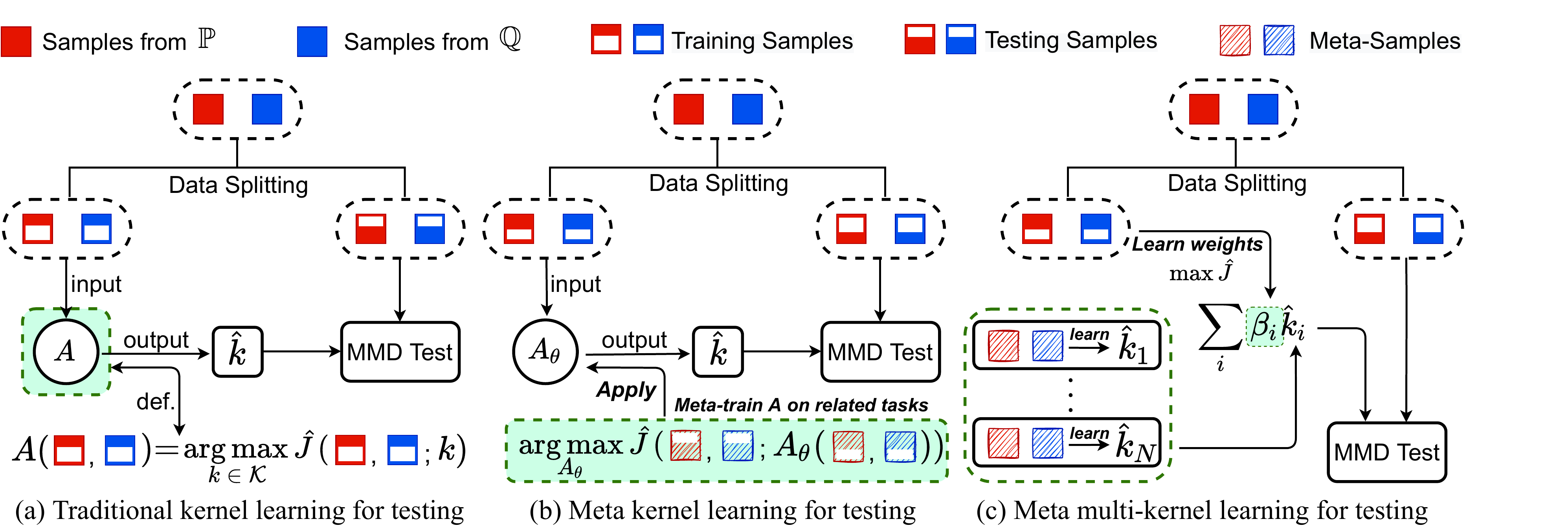}
        \caption{Comparison among (a) traditional kernel learning \citep{sutherland:mmd-opt,liu2020learning}, (b) meta kernel learning, and (c) meta multi-kernel learning for kernel two-sample testing, where $\hat{k}$ or $\hat{k}_i$ are the learned kernel.
        }
        \label{fig:moti}
    \end{center}
    \vspace{-2em}
\end{figure*}
We provide both theoretical and empirical evidence that Meta-MKL is better than generic Meta-KL, and that both outperform approaches that do not use related task data, in low-data regimes.
We find that learned algorithms can output kernels with high test power using only a few samples,
where ``plain'' kernel learning techniques entirely fail.

\section{Preliminaries}
We will now review the setting of learning a kernel for a two-sample test, following \cite{liu2020learning}.
Let $\mathcal{X}\subset\R^d$
and $\P$, $\Q$ be (unknown) Borel probability measures on $\mathcal{X}$,
with
$S_\P=\{\bm{x}_i\}_{i=1}^m \sim \P^m$ and $S_\Q=\{\bm{y}_j\}_{j=1}^m \sim \Q^m$
observed i.i.d.\ samples from these distributions.
We operate in a classical hypothesis testing setup,
with the null hypothesis that $\P = \Q$.

\textbf{Maximum mean discrepancy (MMD).}
The basic tool we use is a kernel-based distance metric between distributions called the MMD, defined as follows.
(The energy distance \citep{Szekely2013} is a special case of the MMD for a particular choice of $k$ \citep{sejdinovic2013equivalence}.)
\begin{definition}[MMD \cite{GBRSS12:mmd}]
Let $k:\mathcal{X} \times \mathcal{X} \to \R$
be the bounded\footnote{For the given expressions to exist and agree, we in fact only need Bochner integrability; this is implied by boundedness of either the kernel or the distribution, but can also hold more generally.} kernel
of an RKHS $\mathcal{H}_{k}$ (i.e., $\sup_{x, y \in \X} \abs{k(x, y)}< \infty$).
Letting $X, X' \sim \P$ and $Y, Y' \sim \Q$ be independent random variables,
\begin{align*}
\MMD(\P, \Q; k)
  &= \sup_{f\in\H_k, \norm{f}_{\H_k} \le 1} \E[ f(X) ] - \E[ f(Y) ] 
 = \sqrt{ \E\left[ k(X, X') + k(Y, Y') - 2 k(X, Y) \right] }.
\end{align*}
If $k$ is \emph{characteristic}, we have that $\MMD(\P, \Q; k) = 0$ if and only if $\P = \Q$.
\end{definition}
We can estimate $\MMD$ using the following $U$-statistic estimator,
which is unbiased for $\MMD^2$ (denoted by $
\widehat{\MMD}_u^2$) and has nearly minimal variance among unbiased estimators \cite{GBRSS12:mmd}:
\begin{gather}
\widehat{\MMD}_u^2(S_\P, S_\Q; k)
:= \frac{1}{m (m-1)} \sum_{i \ne j} H^{(k)}_{ij},
\label{eq:MMD_U_compute}
\\
H^{(k)}_{ij} :=
    k(\vx_i, \vx_j)
    + k(\vy_i, \vy_j)
    - k(\vx_i, \vy_j)
    - k(\vy_i, \vx_j)
\label{eq:Hij}
.\end{gather}

\textbf{Testing.}
Under the null hypothesis $\nullhyp$,
$m \widehat{\MMD}_u^2$ converges in distribution as $m \to \infty$ to some distribution depending on $\P$ and $k$ \citep[Theorem 12]{GBRSS12:mmd}.
We can thus build a test with $p$-value equal to the quantile of our test statistic $m \widehat\MMD_u^2$ under this distribution.
Although there are several methods to estimate this null distribution,
it is usually considered best \citep{sutherland:mmd-opt} to use a permutation test \citep{dwass1957,AlbaFernandez2008}:
under $\nullhyp$,
samples from $\P$ and $\Q$ are interchangeable,
and repeatedly re-computing the statistic with samples randomly shuffled between $S_\P$ and $S_\Q$
estimates its null distribution.

\textbf{Test power.}
We generally want to find tests likely to reject $\nullhyp$ when indeed it holds that $\P \ne \Q$; the probability of doing so (for a particular $\P$, $\Q$, $k$ and $m$) is called \emph{power}.
For reasonably large $m$, \citet{sutherland:mmd-opt,liu2020learning}
argue that the power is an almost-monotonic function of
\begin{equation} \label{eq:tpp}
    J(\P, \Q; k)
    := \frac{\MMD^2(\P, \Q; k)}{\sigma_{\althyp}(\P, \Q; k)}
    ,\quad
    \sigma_{\althyp}^2(\P, \Q; k)
    := 4 \left( \E\left[H^{(k)}_{ij} H^{(k)}_{i\ell} \right] - \E\left[H^{(k)}_{ij}\right]^2 \right)
.\end{equation}
Here,
$\sigma_{\althyp}^2$
is the asymptotic variance of $\sqrt{m} \, \widehat\MMD_u^2$ under $\althyp$;
it is defined in terms of an expectation of \eqref{eq:Hij} with respect to the data samples $S_\P$, $S_\Q$, for $i, j, \ell$ distinct.
The criterion \eqref{eq:tpp} depends on the unknown distributions;
we can estimate it from samples with the regularized estimator \citep{liu2020learning}
\begin{gather} \label{eq:tpp-hat}
  \hat J_\lambda(S_\P, S_\Q; k) := {%
    \widehat\MMD_u^2(S_\P, S_\Q; k)
  }\Big/{\sqrt{%
    \hat\sigma^2_{\althyp}(S_\P, S_\Q; k) + \lambda
  }},
  \\
  \hat\sigma^2_{\althyp}(S_\P, S_\Q; k)
  :=
    \frac{4}{m^3} \sum_{i=1}^m \left( \sum_{j=1}^m H^{(k)}_{ij} \right)^2
  - \frac{4}{m^4}\left( \sum_{i=1}^m \sum_{j=1}^m H^{(k)}_{ij} \right)^2
.\end{gather}

\textbf{Kernel choice.}
Given two samples $S_\P$ and $S_\Q$,
the best kernel is (essentially) the one that maximizes $J$ in \eqref{eq:tpp}.
If we pick a kernel to maximize our estimate $\hat J$ using the same data that we use for  testing, though, we will ``overfit,''
and reject $\nullhyp$ far too often.
Instead, we use data splitting \citep{GSSSP12:kernel-choice,Jitkrittum2016,sutherland:mmd-opt}: we partition the samples into two disjoint sets, $S_\P =S^{tr}_\P \cup S^{te}_\P$,
obtain $k^{tr} = A(S_\P^{tr}, S_\Q^{tr}) \approx \argmax_k \hat J_\lambda(S_\P^{tr}, S_\Q^{tr}; k)$,
then conduct a permutation test based on $\MMD(S_\P^{te}, S_\Q^{te}; k^{tr})$.
This process is summarized in \cref{alg:testing} and illustrated in \cref{fig:moti}a.

This procedure has been successfully used
not only to, e.g., pick the best bandwidth for a simple Gaussian kernel,
but even to learn all the parameters of a kernel like \eqref{eq:deepkernel_simpleForm} which incorporates a deep network architecture
\citep{sutherland:mmd-opt,liu2020learning}.
As argued by \citet{liu2020learning},
classifier two-sample tests \cite{Lopez:C2ST,Ramdas:clf} 
(which test based on the accuracy of a classifier distinguishing $\P$ from $\Q$)
are also essentially a special case of this framework --
and more-general deep kernel MMD tests tend to work better.
Although presented here specifically for $\widehat\MMD_u$,
an analogous procedure has been used for many other problems, including other estimates of the MMD and closely-related quantities \citep{GSSSP12:kernel-choice,Jitkrittum2016,Jitkrittum2017}.

When data splitting, the training split must be big enough to identify a good kernel; with too few training samples $m^{tr}$, $\hat J_\lambda$ will be a poor estimator, and the kernel will overfit. The testing split, however, must also be big enough: for a given $\P$, $\Q$, and $k$, it becomes much easier to be confident that $\MMD(\P, \Q; k) > 0$ as $m^{te}$ grows and the variance in $\widehat\MMD_u(\P, \Q; k)$ accordingly decreases. When the number of available samples is small, both steps suffer.
This work seeks methods where, by using related testing tasks, we can identify a good kernel with an extremely small $m^{tr}$; thus we can reserve most of the available samples for testing, and overall achieve a more powerful test.

Another class of techniques for kernel selection avoiding the need for data splitting is based on selective inference \citep{Kubler2020learning}.
At least as studied by \citet{Kubler2020learning},
however, it is currently available only for restricted classes of kernels and with far-less-accurate ``streaming'' estimates of the MMD,
which for fixed kernels can yield \emph{far} less powerful tests than $\widehat\MMD_u$ \citep{RRPSW15:comp-tradeoffs}.
In \cref{sec:ablation}, we will demonstrate that in our settings, the data-splitting approach is empirically much more powerful.

\section{Meta Two-Sample Testing}

To handle cases with low numbers of available data samples,
we consider a problem setting where \emph{related} testing tasks are available.
We use those tasks in a framework inspired by meta-learning \citep[e.g.][]{finn2017model},
where we use those related tasks to identify a kernel selection \emph{algorithm}, as in \eqref{eq:learn_A}.
Specifically, we define a \emph{task} as a pair $\mathcal T = (\P, \Q)$ of distributions over $\mathcal X$ we would like to distinguish,
and assume a meta-distribution $\tau$ over the space of tasks $\mathcal T$.

\begin{definition}[M2ST]\label{def:m2st}
Assume we are assigned (unobserved) training tasks $\mT = \{\mathcal{T}^{i}=(\P^i,\Q^i)\}_{i=1}^{N}$ drawn from a task distribution $\tau$,
and observe meta-samples $\bm{S} = \{ (S_\P^i, S_\Q^i) \}_{i=1}^{N}$
with $S_\P^i\sim(\P^i)^{n_i}$ and $S_\Q^i\sim(\Q^i)^{n_i}$.
Our goal is to use these meta-samples to find a kernel learning algorithm $A_\theta$, such that for a target task $(\P', \Q') \sim \tau$ with samples $S_{\P'} \sim (\P')^{n'}$ and $S_{\Q'} \sim (\Q')^{n'}$,
the learning algorithm returns a kernel $A_\theta(S_{\P'}, S_{\Q'})$
which will achieve high test power on $(\P', \Q')$.
\end{definition}

We measure the performance of $\A_\theta$ based on the expected test power criterion for a target task:
\begin{equation}
    \mathcal{J}(\A_\theta,\tau)
    = \E_{(\P, \Q) \sim \tau}\left[
        \E_{S^{tr}_\P \sim \P^m, S^{tr}_\Q \sim \Q^m}\left[ 
            J(\P, \Q; \A_\theta(S^{tr}_\P, S^{tr}_\Q))
        \right]
    \right]
    \label{eq:expected-J}
.\end{equation}
If $\tau$ were in some sense ``uniform over all conceivable tasks,'' then a no-free-lunch property would cause M2ST to be hopeless.
Instead, our assumption is that tasks from $\tau$ are ``related'' enough that we can make progress at improving \eqref{eq:expected-J}.

By assuming the existence of a meta-distribution $\tau$ over tasks, it is promising to learn a general rule across different tasks \cite{finn2017model}. Furthermore, we can quickly adapt to a solution to a specific task based on the learned rule \cite{finn2017model}, which is the reason why researchers have been focusing on meta-learning for several years. Specifically, in the meta two-sample testing, we hope to find ``what differences between distributions generally look like'' on the meta-tasks, and then at test time, use our very limited data to search for differences in that more constrained set of options. Because the space of candidate rules is more limited, we can (hopefully) find a good rule with a much smaller number of data points.
For example, if we have meta-tasks which (like the target task) are determined only by a difference in means, then we want to learn a general rule that distinguishes between two samples by means. For a new task (i.e., new two samples), we hope to identify dimensions where two samples have different means, using only a few data points.

We will propose two approaches to finding an $A_\theta$.
Neither is specific to any particular kernel parameterization,
but for the sake of concreteness,
we follow \citet{liu2020learning} in choosing the form 
\begin{align}
\label{eq:deepkernel_simpleForm}
    k_\omega(x,y) = [(1-\epsilon)\kappa(\phi(x),\phi(y)) + \epsilon] \, q(x,y),
\end{align}
where $\phi$ is a deep neural network which extracts features from the samples,
and $\kappa$ is a simple kernel (e.g., a Gaussian) on those features,
while $q$ is a simple characteristic kernel (e.g. Gaussian) on the input space;
$\epsilon \in (0, 1]$ ensures that every kernel of the form $k_\omega$ 
is characteristic.
Here, $\omega$ represents all parameters in the kernel:\footnote{We use $k_{\omega}$ 
when discussing issues relating to the parameters $\omega$. 
When no ambiguity arises, we use $k$ and $k_{\omega}$ interchangeably for deep neural network parameterized kernels.
If we write only $k$, then $\omega$ still means the parameters of $k$ by default.
} 
most parameters come from deep neural network $\phi$, but $\kappa$ and $q$ may have a few more parameters (e.g.\ length scales), and we can also learn $\epsilon$.
\wk{Droping $\omega$ subscript explained and $k_{\omega}$ only kept for gradient explanation in step 2 Algorithm 1 and uniform/Lipschitz conditions in theory part.}

\textbf{Meta-KL.}
We first propose \cref{alg:meta-kl} as a standard approach to optimizing \eqref{eq:expected-J}, \`a la MAML \citep{finn2017model}:
$A_\theta$ takes a small, fixed number of gradient ascent steps in $\hat J_\lambda$ for the parameters of $k_\omega$, starting from a learned initialization point $\omega_\mathit{start} \in \theta$ (lines $1$-$2$).
We differentiate through $A_\theta$, and perform stochastic gradient ascent to find a good value of $\theta$ based on the meta-training sets (lines $3$-$6$, also illustrated in \cref{fig:learn_alg}).
Once we have learned a kernel selection procedure, we can again apply it to a testing task with \cref{alg:testing}.

As we will see in the experiments, this approach does indeed use the meta-tasks to improve performance on target tasks.
Differently from usual meta-learning settings, as in e.g.\ classification \cite{finn2017model}, however, here it is conceivable that there is a single good kernel that works for all tasks from $\tau$;
improving on this single baseline kernel, rather than simply overfitting to the very few target points, may be quite difficult.
Thus, in practice, {the amount of adaptation that $A_\theta$ actually performs in its gradient ascent can be somewhat limited.}

\textbf{Meta-MKL.}
As an alternative approach, we also consider a different strategy for $A_\theta$ which may be able to adapt with many fewer data samples, albeit in a possibly weaker class of candidate kernels.
Here, to select an $A_\theta$, we simply find the best kernel independently for each of the meta-training tasks. Then $A_\theta$ chooses the best \emph{convex combination} of these kernels, as in classical multiple kernel learning \citep{GA11:mkl}
and similarly to ensemble methods in few-shot classification \cite{dvornik2020selecting}.
At adaptation time, we only attempt to learn $N$ weights, rather than adapting all of the parameters of a deep network;
but, if the meta-training tasks contained some similar tasks to the target task, then we should be able to find a powerful test.
This procedure is detailed in \cref{alg:meta-mkl} and illustrated in \cref{fig:moti}c.

\begin{figure}[!t]
\begin{minipage}{\linewidth}
\begin{algorithm}[H]
\caption{Meta Kernel Learning (Meta-KL)}
\label{alg:meta-kl}
\begin{algorithmic}
\small
\STATE \textbf{Input:}
    Meta-samples $\mS = \{ (S_{\P^i}, S_{\Q^i}) \}_{i=1}^N$;
    kernel architecture \eqref{eq:deepkernel_simpleForm} and parameters $\omega_0$;
    regularization $\lambda$

\STATE \textbf{1. Initialize} algorithm parameters:
    $\theta := [\omega_\textit{start}] \gets [\omega_0]$

\STATE \textbf{2. Define} a parameterized learning algorithm $A_\theta(S_\P, S_\Q)$ as:
\STATE \qquad
    $\omega \gets \omega_\textit{start}$;
    \textbf{for } $t = 1, \dots, n_\textit{steps}$ \textbf{do}
    $\omega \gets \omega + \eta \nabla_\omega \hat J_\lambda(S_{\P}, S_{\Q}; k_{\omega})$;
    \textbf{end for};
    \textbf{return} %
    $k_\omega$

\FOR{$T = 1,2,\dots,T_\mathit{max}$}
    \STATE \textbf{3: Sample} $\mathcal I$ as a set of indices in $\{1, 2, \dots, N \}$ of size $n_\mathit{batch}$
    \FOR{$i \in \mathcal I$} 
        \STATE \textbf{4: Split} data as $S_{\P^i} = S_{\P^i}^{tr} \cup S_{\P^i}^{te}$ and $S_{\Q^i} = S_{\Q^i}^{tr} \cup S_{\Q^i}^{te}$;
        \STATE \textbf{5: Apply} the learning algorithm: $k_i \gets \A_\theta(S_{\P^i}^{tr}, S_{\Q^i}^{tr}) $
    \ENDFOR
    \STATE \textbf{6: Update} $\theta \gets \theta + \beta\, \nabla_\theta \sum_{i \in \mathcal I} \hat J_\lambda(S_{\P^{i}}^{te}, S_{\Q^{i}}^{te}; k_i)$;
      \hfill \textit{\# update $\theta$, i.e.\ $\textit{start}$, to maximize $\mathcal{J}(\A_\theta,\tau)$}
\ENDFOR
\STATE \textbf{7: return} $\A_\theta$
\end{algorithmic}
\end{algorithm}
\end{minipage}

\begin{minipage}{.5\linewidth}
\begin{figure}[H]
    \begin{center}
        {\includegraphics[width=0.95\textwidth]{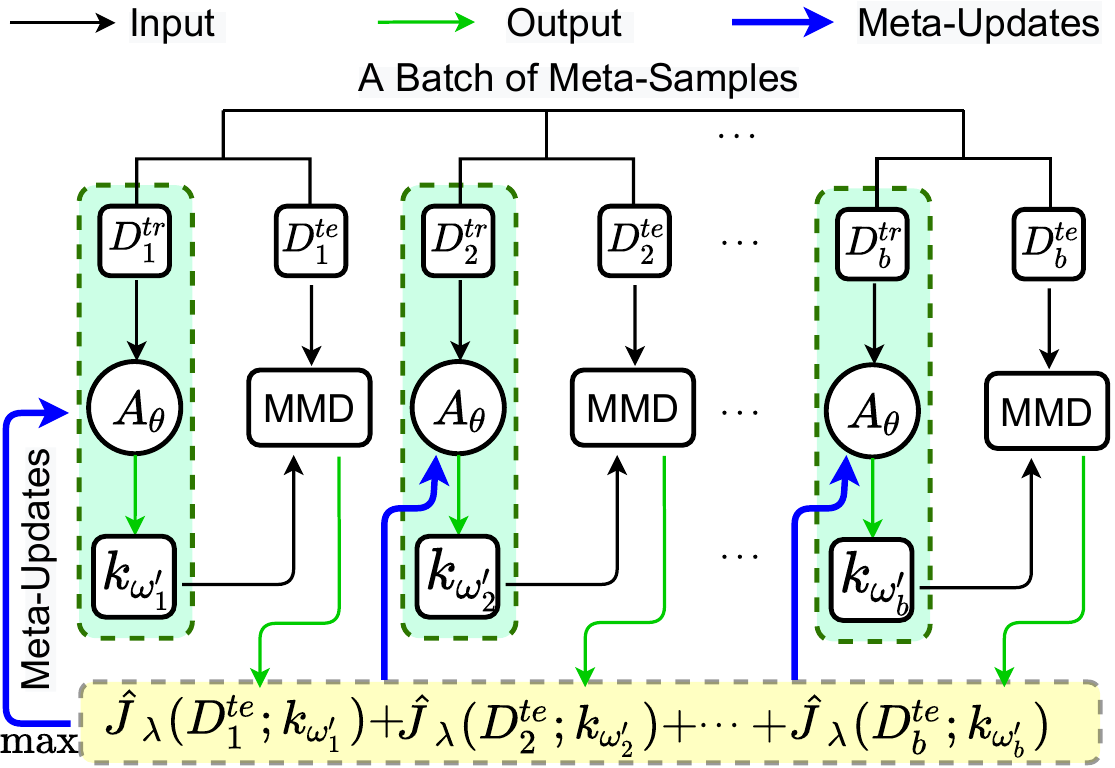}}
        \vspace{-0.9em}
        \caption{Illustration of \cref{alg:meta-kl}.}
        \label{fig:learn_alg}
    \end{center}
    \vspace{-1em}
\end{figure}
\end{minipage}
\begin{minipage}{.5\linewidth}
        \begin{algorithm}[H]
        \footnotesize
        \begin{algorithmic}
        \STATE \textbf{1: Input:} Two samples: $S_\P$, $S_\Q$; algorithm $\A_\theta$ 
        
        \STATE \textbf{2: Split} data as $S_\P = S^{tr}_\P\cup S^{te}_\P$ and $S_\Q = S^{tr}_\Q\cup S^{te}_\Q$;

         \STATE \textbf{3: Learn} a kernel $k=\A_\theta(S^{tr}_\P,S^{tr}_\Q)$;

        \STATE \textbf{4: Compute} $\mathit{est} \gets \widehat\MMD_u^2(S^{te}_\P, S^{te}_\Q; k)$;
        
        \FOR{$i = 1, 2, \dots, n_\mathit{perm}$}
         \STATE \textbf{5: Shuffle} $S^{te}_\P \cup S^{te}_\Q$ into $X$ and $Y$
         \STATE \textbf{6: Compute} $\mathit{perm}_i \gets \widehat\MMD_u^2(X, Y; k)$;
        \ENDFOR
        \STATE \textbf{7: Output:} $p$-value $\frac{1}{n_\mathit{perm}} \sum_{i=1}^{n_\mathit{perm}} \mathbbm{1}(\mathit{perm}_i \ge \mathit{est})$;
        \end{algorithmic}
        
        \caption{Testing with a Kernel Learner}\label{alg:testing}
        \end{algorithm}
\end{minipage}
\end{figure}

\begin{algorithm}[!t]
\small
\begin{algorithmic}
    \STATE \textbf{Input:}
        Meta-samples $\mS = \{ (S_{\P^i}, S_{\Q^i}) \}_{i=1}^N$;
        kernel architecture as in \eqref{eq:deepkernel_simpleForm}
    \FOR{$i = 1, 2, \dots, N$}
        \STATE \textbf{1: Optimize} $\omega_i \gets \widehat\argmax_{\omega} \hat J_\lambda(S_{\P^i}, S_{\Q^i}; k_i)$ with some approximate optimization algorithm;
    \ENDFOR
    \STATE \textbf{2: Define} $\mathcal K := \{ \sum_{i=1}^N \beta_i k_{i} : \beta \in \R_{\ge 0}^N, \sum_i \beta_i = 1 \}$;
        \hfill \textit{\# convex combinations of kernels}
    \STATE \textbf{3: return} the algorithm
    $A(S_\P, S_\Q) = \widehat\argmax_{k \in \mathcal K} \hat J_\lambda\left( S_\P, S_\Q; k \right)$;
\end{algorithmic}
        
\caption{Meta Multi-Kernel Learning (Meta-MKL)}\label{alg:meta-mkl}
\end{algorithm}

\section{Theoretical analysis}
We now analyze and compare the theoretical performance of direct optimizing the regularized test power from small sample size with our proposed meta-training procedures.
To study our learning objective of approximate test power, we first state the following relevant technical assumptions,
\citep{liu2020learning}.
\begin{assumplist}
  \item \label{assump:k-bounded}
    The kernels $k_\omega$ are uniformly bounded as follows.
    For the kernels we use in practice, $\nu = 1$.
    \[
      \sup_{\omega \in \Omega} \sup_{x \in \X} k_\omega(x, x) \le \nu
    .\]

  \item \label{assump:omega-bounded}
    The possible kernel parameters $\omega$
    lie in a Banach space of dimension $D$.
    Furthermore, the set of possible kernel parameters $\Omega$
    is bounded by $R_\Omega$:
    $\Omega \subseteq \left\{ \omega \mid \|\omega\| \le R_\Omega \right\}$.
  \item \label{assump:k-lipschitz}
    The kernel parameterization is Lipschitz:
    for all $x, y \in \X$
    and $\omega, \omega' \in \Omega$,
    \[
        |{k_\omega(x, y) - k_{\omega'}(x, y)}| \le L_k \|{\omega - \omega'}\|
    .\]
\end{assumplist}
See Proposition 9 of \citet{liu2020learning} for bounds on these constants when using e.g.\ kernels of the form \eqref{eq:deepkernel_simpleForm},
in terms of the network architecture.

We will use $\sigma_\omega^2$ to refer to $\sigma_{\althyp}^2(\P, \Q; k_\omega)$ of \eqref{eq:tpp},
and analogously for $\hat\sigma_\omega^2$,
for the sake of brevity.

\begin{prop}[Direct training with
 approximate test power, Theorem 6 of \cite{liu2020learning}]
\label{thm:approx-direct}
Under \cref{assump:omega-bounded,assump:k-lipschitz,assump:k-bounded},
suppose $\nu \ge 1$ is constant,
and let $\bar\Omega_s \subseteq \Omega$ be the set of $\omega$ for which $\sigma_\omega^2 \ge s^2$.
Take the regularized estimate $\hat\sigma_{\omega,\lambda}^2 = \hat\sigma_{\omega}^2 + \lambda$ with $\lambda = m^{-1/3}$.
Then, with probability at least $1 - \delta$,
\[
    \sup_{\omega \in \bar\Omega_s}
    \Big\lvert
      \hat{J}_\lambda(S_\P, S_\Q; k_\omega)
    - J(\P, \Q; k_\omega)
    \Big\rvert
    \le \xi_m
    = \bigO\left(\frac{1}{s^2 m^{1/3}} \left[ \frac1s + \sqrt{D\log(R_{\Omega}m) + \log\frac{1}{\delta}} + L_k \right]
\right)
.\]
Then, letting $\hat k \in \argmax \hat J_\lambda(S_\P, S_\Q; \hat k)$,
we have
$
    0 \le 
    \sup_{k \in \K} J(\P, \Q; k)
    - J(\P, \Q; \hat k)
    \le 2 \xi_m
$.
\end{prop}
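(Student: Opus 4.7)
The plan is to decompose the claim into two uniform concentration bounds -- one for the numerator $\widehat{\MMD}_u^2$ and one for the denominator $\hat\sigma_\omega^2$ -- then to propagate those additive errors through the ratio $\hat J_\lambda$ using $\sigma_\omega \ge s$ on $\bar\Omega_s$ and the regularizer $\lambda$ as a floor, and finally to read off the oracle-type inequality by a one-line comparison argument.

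First I would fix $\omega$ and note that both $\widehat{\MMD}_u^2$ and $\hat\sigma_\omega^2$ are bounded symmetric functions of the $2m$ observations: Assumption (A) gives $|H^{(k_\omega)}_{ij}| \le 4\nu$, so replacing a single sample perturbs $\widehat{\MMD}_u^2$ by $\bigO(\nu/m)$ and $\hat\sigma_\omega^2$ by $\bigO(\nu^2/m)$ (the extra $\nu$ arising from the products $H_{ij}H_{i\ell}$ inside $\hat\sigma_\omega^2$). McDiarmid's inequality then produces pointwise sub-Gaussian concentration about $\MMD^2$ and $\sigma_\omega^2$ of width $\bigO(\nu^2\sqrt{\log(1/\delta)/m})$. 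To promote this to uniformity over $\omega$, I would use Assumption (C) (the kernel parameterization is $L_k$-Lipschitz in $\omega$), which makes both statistics Lipschitz in $\omega$ with constants $\bigO(L_k)$ and $\bigO(\nu L_k)$ respectively, together with Assumption (B), which lets me cover $\Omega$ by an $\epsilon$-net of cardinality at most $(3R_\Omega/\epsilon)^D$. Union-bounding the pointwise bounds over the net and extending by Lipschitz continuity (taking $\epsilon \asymp 1/m$) yields, with probability $\ge 1-\delta$ and simultaneously in $\omega \in \Omega$,
\begin{equation*}
  |\widehat{\MMD}_u^2 - \MMD^2| \le \eta_m, \qquad |\hat\sigma_\omega^2 - \sigma_\omega^2| \le \eta_m,
\end{equation*}
with $\eta_m = \bigO\!\big(\nu^2 m^{-1/2}\sqrt{D\log(R_\Omega m) + \log(1/\delta)} + L_k/m\big)$.

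Restricting now to $\omega \in \bar\Omega_s$, I would convert these into a bound on the ratio via the algebraic decomposition
\begin{equation*}
  \hat J_\lambda - J \;=\; \frac{\widehat{\MMD}_u^2 - \MMD^2}{\sqrt{\hat\sigma_\omega^2 + \lambda}} \,+\, \MMD^2\left(\frac{1}{\sqrt{\hat\sigma_\omega^2 + \lambda}} - \frac{1}{\sigma_\omega}\right).
\end{equation*}
For $m$ large enough that $\eta_m \le s^2/2$, the assumption $\sigma_\omega^2 \ge s^2$ combined with the uniform bound forces $\hat\sigma_\omega^2 \ge s^2/2$, so the first summand is at most $\bigO(\eta_m/s)$. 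For the second, I would invoke the elementary inequality $|a^{-1/2} - b^{-1/2}| \le |a-b|/(2\min(a,b)^{3/2})$ with $a = \hat\sigma_\omega^2 + \lambda$ and $b = \sigma_\omega^2$, together with $\MMD^2 \le 4\nu$, producing a contribution of order $\nu(\eta_m + \lambda)/s^3$. With $\lambda = m^{-1/3}$, the $\lambda/s^3$ bias dominates the faster $m^{-1/2}$ terms and, after regrouping the factors, matches the stated $\xi_m$. The oracle bound then drops out: for $k^* \in \argmax_{k \in \K} J(\P,\Q;k)$ and $\hat k \in \argmax_{k \in \K} \hat J_\lambda$,
\begin{equation*}
  J(k^*) - J(\hat k) \;=\; \underbrace{J(k^*) - \hat J_\lambda(k^*)}_{\le \xi_m} \,+\, \underbrace{\hat J_\lambda(k^*) - \hat J_\lambda(\hat k)}_{\le 0} \,+\, \underbrace{\hat J_\lambda(\hat k) - J(\hat k)}_{\le \xi_m} \;\le\; 2\xi_m,
\end{equation*}
and nonnegativity holds by definition of $k^*$.

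The hard part will be the concentration of $\hat\sigma_\omega^2$: unlike $\widehat{\MMD}_u^2$, which is a clean degree-two U-statistic, the estimator $\hat\sigma_\omega^2$ is a third-order V-statistic involving products $H^{(k)}_{ij} H^{(k)}_{i\ell}$, so both the bounded-differences constant and the Lipschitz constant in $\omega$ each acquire an extra factor of $\nu$, and some care is needed to verify that the resulting rate stays $\bigO(m^{-1/2})$ and that the $s$-dependence on $\bar\Omega_s$ is preserved through the ratio manipulation. Once these are settled, the remaining algebra is routine.
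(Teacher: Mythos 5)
Your proposal is correct and follows essentially the same route as the source of this result: the paper does not reprove \cref{thm:approx-direct} but imports it from Theorem 6 of \citet{liu2020learning}, whose argument is exactly your combination of McDiarmid concentration for $\widehat\MMD_u^2$ and $\hat\sigma_\omega^2$, an $\epsilon$-net over $\Omega$ via \cref{assump:omega-bounded,assump:k-lipschitz}, the ratio decomposition on $\bar\Omega_s$ with the regularizer $\lambda=m^{-1/3}$ balancing the bias, and the standard three-term oracle comparison. The same template (minus the covering argument, since the kernel set is finite there) reappears in the paper's own proof of \cref{lem:best_one_adaptation}, so your reconstruction is consistent with how the authors themselves use these bounds.
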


Since $m$ is small in our settings, and $s$ may also be small for deep kernel classes as noted by \citet{liu2020learning}, 
this bound may not give satisfying results.

The key mechanism that drives meta-testing to work, intuitively, is training kernels on \emph{related} tasks. How do we quantify the relatedness between different testing tasks?

\begin{definition}[$\gamma$-relatedness] \label{def:relatedness}
Let $(\P, \Q)$ and $(\P', \Q')$ be the underlying distributions for two different two-sample testing tasks. We say the two tasks are $\gamma$-related w.r.t. learning objective $J$ if
\begin{equation}
    \sup_{k \in \K}
        \left\lvert J(\P, \Q ; k ) - J(\P', \Q' ; k ) \right\rvert
    = \gamma
.\end{equation}
\end{definition}

The relatedness measure is a (strong) assumption that two tasks are similar, because all kernels perform similarly on the two tasks, in terms of the approximate test power objective.
It also implies the two problems are of similar difficulty, since for small $\gamma$, the ability of our MMD test statistics to distinguish the distributions (with optimal kernels) are similar. 

\begin{definition}[Adaptation with Meta-MKL]\label{def:meta-mkl}
Given a set of kernels $\{k_{1},\dots,k_{N}\}$,
the Meta-MKL adaptation is the kernel $\hat k = \sum_{i = 1}^N \hat \beta_i k_{i}$,
where $\hat \beta = \argmax_\beta
\hat J_{\lambda^{ne}}(S^{tr}_\P, S^{tr}_\Q; \sum_i \beta_i k_{i} )$.
\end{definition}

This adaptation step uses the same learning objective, $\hat J_\lambda$, as directly training a deep kernel in \cref{thm:approx-direct} (though with a potentially different regularization parameter, $\lambda^{ne}$). 

To analyze the Meta-MKL scheme, we will make the following assumption, which
Proposition 26 in \citet{liu2020learning} shows implies \cref{assump:k-bounded,assump:omega-bounded,assump:k-lipschitz}
with $\nu = K R_{B}\sqrt{N}$ and $L_k = K\sqrt{N}$.
\begin{assumplist}[resume]
\item \label{assump:mkl}
      Let $\{ k_i \}_{i=1}^N$ be a set of base kernels,
      each satisfying $\sup_{x \in \X} k_i(x, x) \le K$ for some finite constant $K$.
      Define the parameterized kernel $k$ as
      \begin{equation}\label{eq:mkl-form}
          k_\beta(x, y) = \sum_{i=1}^N \beta_i k_i(x, y)
      \end{equation}
      where $\beta \in \R^N$,
      and let $B$ be the set of parameters $\beta$ such that $k_\beta$ is positive semi-definite (guaranteed if each $\beta_i \ge 0$)
      and $\norm\beta \le R_B$ for some $R_B < \infty$.
\end{assumplist}

\begin{theorem}[Performance of Meta-MKL]\label{thm:mkl}
Suppose we have $N$ meta-training tasks $\{(\P^i, \Q^i)\}_{i \in [N]}$,
with corresponding optimal kernels $k_i^* \in \argmax_{k \in \K} J(\P^i, \Q^i; k)$,
and use $n$ samples to learn kernels $\hat k_i \in \argmax_{k \in \K} \hat J_\lambda(S_{\P^i}, S_{\Q^i}; k)$
in the setting of \cref{thm:approx-direct}.
Let $(\P, \Q)$ be a test task,
with optimal kernel $k^* \in \argmax_{k \in \K} J(\P, \Q; k)$,
from which we observe $m$ samples $S_\P, S_\Q$.
Call the Meta-MKL adapted kernel $\hat k_{\hat\beta} = \sum_i \hat\beta_i \hat k_i$, as in \eqref{eq:mkl-form}, with $\hat\beta$ found subject to \cref{assump:mkl}.
Let $(\P^j, \Q^j)$ be a meta-task which is $\gamma$-related to $(\P, \Q)$.
Then, with probability at least $1 - 2 \delta$,
\[
    J(\P, \Q; k^*) - J(\P, \Q; \hat k_{\beta})
    \le 2 (\gamma + \xi_n^j + \xi_m)
\]
where $\xi_n^j$ is the bound of \cref{thm:approx-direct} for learning a kernel on $(\P^j, \Q^j)$,
and $\xi_m$ is the equivalent bound for multiple kernel learning on $(\P, \Q)$,
which has $D = N$, $R_\Omega = R_B$, and $L_k = K \sqrt{N}$.
\end{theorem}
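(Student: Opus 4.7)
The plan is to exhibit a feasible MKL weight vector that concentrates all of its mass on the kernel learned from the $\gamma$-related task $(\P^j, \Q^j)$, and then use this as a comparator to bound the suboptimality of $\hat k_{\hat\beta}$. Concretely, let $\beta^\dagger$ be the weight vector with $\beta^\dagger_j = 1$ and $\beta^\dagger_i = 0$ otherwise; provided $R_B \ge 1$ (otherwise take $\beta^\dagger_j$ maximally feasible and rescale), $\beta^\dagger$ is admissible under \cref{assump:mkl} and $k_{\beta^\dagger} = \hat k_j$, so $\hat k_j$ lies in the Meta-MKL class $\K_{\mathrm{mkl}} := \{\sum_i \beta_i \hat k_i : \beta \in B\}$.

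Next I would invoke \cref{thm:approx-direct} twice. First, apply it on the meta-task $(\P^j, \Q^j)$ with $n$ samples and the original kernel class $\K$: with probability at least $1-\delta$, the learned kernel $\hat k_j$ satisfies $J(\P^j,\Q^j; k_j^*) - J(\P^j,\Q^j; \hat k_j) \le 2\xi_n^j$. Second, apply it on the test task $(\P, \Q)$ with $m$ samples, but now with the MKL hypothesis class $\K_{\mathrm{mkl}}$, using the constants $D = N$, $R_\Omega = R_B$, $L_k = K\sqrt{N}$ supplied by \cref{assump:mkl}: with probability at least $1-\delta$,
\[
\sup_{k \in \K_{\mathrm{mkl}}} J(\P,\Q;k) - J(\P,\Q;\hat k_{\hat\beta}) \le 2\xi_m.
\]
Since $\hat k_j \in \K_{\mathrm{mkl}}$, this in particular gives $J(\P,\Q;\hat k_j) - J(\P,\Q;\hat k_{\hat\beta}) \le 2\xi_m$. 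Union-bounding the two events incurs the $1 - 2\delta$ probability in the conclusion.

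The remaining step is to chain these two estimates through the $\gamma$-relatedness assumption. Using \cref{def:relatedness} twice (once on $k^*$ and once on $\hat k_j$),
\[
J(\P, \Q; k^*) \le J(\P^j, \Q^j; k^*) + \gamma \le J(\P^j, \Q^j; k_j^*) + \gamma \le J(\P^j, \Q^j; \hat k_j) + 2\xi_n^j + \gamma \le J(\P, \Q; \hat k_j) + 2\xi_n^j + 2\gamma,
\]
where the second inequality uses optimality of $k_j^*$ on task $j$ and the third uses the first application of \cref{thm:approx-direct}. Combining this with $J(\P,\Q;\hat k_j) \le J(\P,\Q;\hat k_{\hat\beta}) + 2\xi_m$ yields exactly $J(\P,\Q;k^*) - J(\P,\Q;\hat k_{\hat\beta}) \le 2(\gamma + \xi_n^j + \xi_m)$.

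The main subtlety, rather than an obstacle, is bookkeeping around the hypothesis classes: one must be careful that the two invocations of \cref{thm:approx-direct} use the correct class and its associated constants (original $\K$ at rate $\xi_n^j$ for the meta-side, MKL class $\K_{\mathrm{mkl}}$ at rate $\xi_m$ for the test-side), and that $\gamma$-relatedness is applied only to kernels in the original class $\K$ where both $k^*$ and $\hat k_j$ live. Verifying that $\beta^\dagger$ is feasible is the only place where the structural constants $R_B, K$ enter nontrivially, and modulo a benign rescaling this is immediate. No other properties of $\hat\beta$ are needed — all improvement over directly learning on the test task is captured through the comparator $\hat k_j$.
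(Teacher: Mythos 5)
Your proposal is correct and follows essentially the same route as the paper's proof: you bound $J(\P,\Q;k^*)-J(\P,\Q;\hat k_j)$ by $2(\gamma+\xi_n^j)$ exactly as in terms (a)--(c) of \cref{thm:single-best}, and your direct use of the uniform-convergence conclusion of \cref{thm:approx-direct} over the MKL class (specialized to the comparator $\hat k_j$) is the same content as the paper's terms (ii) and (iii) via the population-optimal $\beta^*$. Your explicit check that the $j$th unit vector is feasible under \cref{assump:mkl} is a small bookkeeping point the paper leaves implicit.
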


The $\xi_n^j$ term depends on the meta-training sample size $n \gg m$.
With enough (relevant) meta-training tasks (as $N$ grows), $\gamma$ is expected to go to $0$.
So, the overall uniform convergence bound is likely to be dominated by the term $\xi_m$,
giving an overall $m^{-1/3}$ rate:
the same as \cref{thm:approx-direct} obtains for directly training a deep kernel only on $(\P, \Q)$.
This is roughly to be expected; similar optimization objectives are applied for both learning and adaptation, which are limited by sample size $m$.
However, the other components of $\xi_m$ are likely much smaller than the corresponding parts of $\xi_n^j$
where the kernels are defined by a deep network:
the variance must be lower-bounded over a much larger set of kernels,
$D$ will be the number of parameters in the network rather than the number of meta-tasks,
and the bound on $L_k$ from Proposition 23 of \citet{liu2020learning} is exponential in the depth of the network.
Altogether, we expect MKL adaptation to be much more efficient than direct training. 

We also expect that \cref{thm:mkl} is actually quite loose.
The proof (in \cref{sec:proofs}) decomposes the loss relative to $\hat k_j$, picking just a single related kernel;
it does not attempt to analyze how combining multiple kernels can improve the test power,
because doing so in general seems difficult.
Given this limitation, however, we also prove in \cref{sec:proofs} a bound on the adaptation scheme which explicitly only picks the single best kernel from the meta-tasks (\cref{thm:single-best}),
which is of a similar form to \cref{thm:mkl} but with the $\xi_m$ term replaced with one even better.

\section{Experiments}

Following \citet{liu2020learning}, we compare the following baseline tests with our methods: 1) \textbf{MMD-D}: \underline{MMD} with a \underline{d}eep kernel whose parameters are optimized; 2) \textbf{MMD-O}: \underline{MMD} with a Gaussian kernel whose lengthscale is \underline{o}ptimized; 3) \underline{M}ean \underline{e}mbedding ({\bf ME}) test \citep{Chwialkowski2015,Jitkrittum2016}; 4) \underline{S}mooth \underline{c}haracteristic \underline{f}unctions ({\bf SCF}) test \citep{Chwialkowski2015,Jitkrittum2016}, and 5) \underline{C}lassifier \underline{two}-\underline{s}ample \underline{t}ests, including {\bf C2ST-S} \citep{Lopez:C2ST} and {\bf C2ST-L} as described in \citet{liu2020learning}. 
None of these methods use related tasks at all,
so we additionally consider an \underline{ag}grega\underline{t}ed \underline{k}ernel \underline{l}earning ({\bf AGT-KL}) method, which optimizes a deep kernel of the form \eqref{eq:deepkernel_simpleForm} by maximizing the value of $\hat{J}_\lambda$ averaged over all the related tasks in the meta-training set $\mathbf S$.

For synthetic datasets,
we take a single sample set for $S^{tr}_\P$ and $S^{tr}_\Q$,
and learn parameters once for each method on that training set.
We then evaluate its rejection rate (power or Type-I error, depending on if $\P = \Q$)
using $100$ new sample sets $S^{te}_\P$, $S^{te}_\Q$.
For real datasets,
we train on a subset of the available data,
then evaluate on $100$ random subsets, disjoint from the training set, of the remaining data.
We repeat this full process $20$ times for synthetic datasets or $10$ times for real datasets,
and report the mean rejection rate of each test and the standard error of the mean rejection rate.
Implementation details
are in \cref{Asec:configuration};
the code is available at \httpsurl{github.com/fengliu90/MetaTesting}.

\subsection{Results on Synthetic Data}\label{sec:synthetic_data}

We use a bimodal Gaussian mixture dataset proposed by \cite{liu2020learning}, known as \emph{high-dimensional  Gaussian mixtures} (HDGM): $\P$ and $\Q$ subtly differ in the covariance of a single dimension pair.
Here we consider only $d = 2$,
since with very few samples the problem is already extremely difficult.
Specifically,
\begin{gather*}
    \P =
    \frac12\mathcal{N}\left(
        \begin{bmatrix} 0 \\ 0 \end{bmatrix},
        \begin{bmatrix}
            1 & 0  \\
            0 & 1
        \end{bmatrix}
    \right)
    +
    \frac12\mathcal{N}\left(
        \begin{bmatrix} 0.5 \\ 0.5 \end{bmatrix},
        \begin{bmatrix}
            1 & 0  \\
            0 & 1
        \end{bmatrix}
    \right),
    \\
    \Q(\Delta^h) =
    \frac12\mathcal{N}\left(
        \begin{bmatrix} 0 \\ 0 \end{bmatrix},
        \begin{bmatrix}
            1 & -\Delta^h  \\
            -\Delta^h & 1
        \end{bmatrix}
    \right)
    +
    \frac12\mathcal{N}\left(
        \begin{bmatrix} 0.5 \\ 0.5 \end{bmatrix},
        \begin{bmatrix}
            1 & \Delta^h  \\
            \Delta^h & 1
        \end{bmatrix}
    \right)
.\end{gather*}
In this paper, our target task is $\mathcal{T}=(\P,\Q(0.7))$ and meta-samples are drawn from the $N = 100$ meta tasks $\bm{T}=\{\mathcal{T}^i:=(\P,\Q(0.3+0.1\times i/N))\}_{i=1}^{N}$; note that the target task is well outside the scope of training tasks.
To evaluate all tests given limited data, we set the number of training samples ($S^{tr}_\P$, $S^{tr}_\Q$) to $50, 100, 150$ per mode, and the number of testing samples ($S^{te}_\P$, $S^{te}_\Q$) from $50$ to $250$.

\Cref{fig:results_syn} illustrates test powers of all tests. Meta-MKL and Meta-KL are the clear winners, with both tests much better when $m_{te}$ is over $100$ per mode. It is clear that previous kernel-learning based tests perform poorly due to limited training samples. Comparing Meta-MKL with Meta-KL, apparently, we can obtain much higher power when we consider using multiple trained kernels. Although AGT-KL performs better than baselines, it cannot adapt to the target task very well: it only cares about ``in-task'' samples, rather than learning to adapt to new distributions. In \cref{Asec:exp:number_tasks}, we report the test power of our tests when increasing the number of tasks $N$ from $20$ to $150$. The results show that that increasing the number of meta tasks will help improve the test power on the target task.

\begin{figure*}[tp]
    \begin{center}
        {\includegraphics[width=0.8\textwidth]{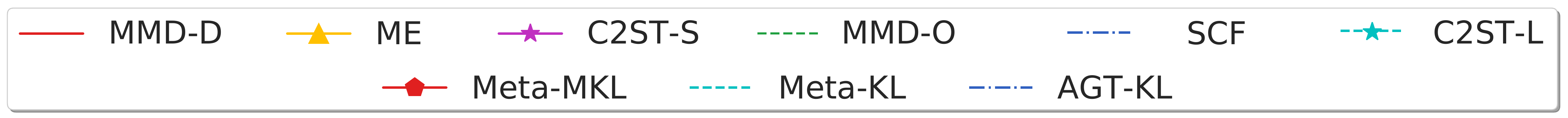}}
        \subfigure[Test power when $m_{tr}=50$.]
        {\includegraphics[width=0.32\textwidth]{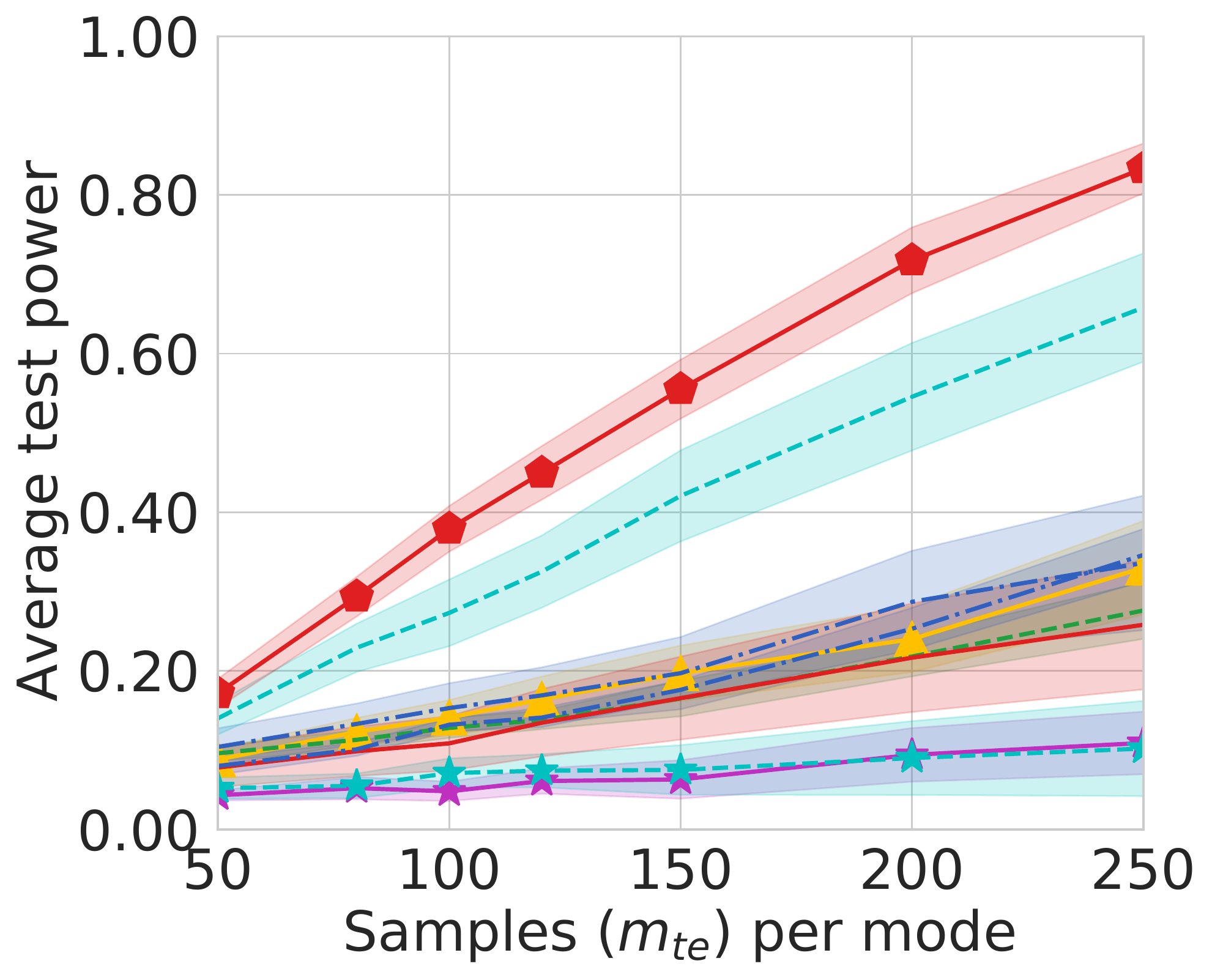}}
        \subfigure[Test power when $m_{tr}=100$.]
        {\includegraphics[width=0.32\textwidth]{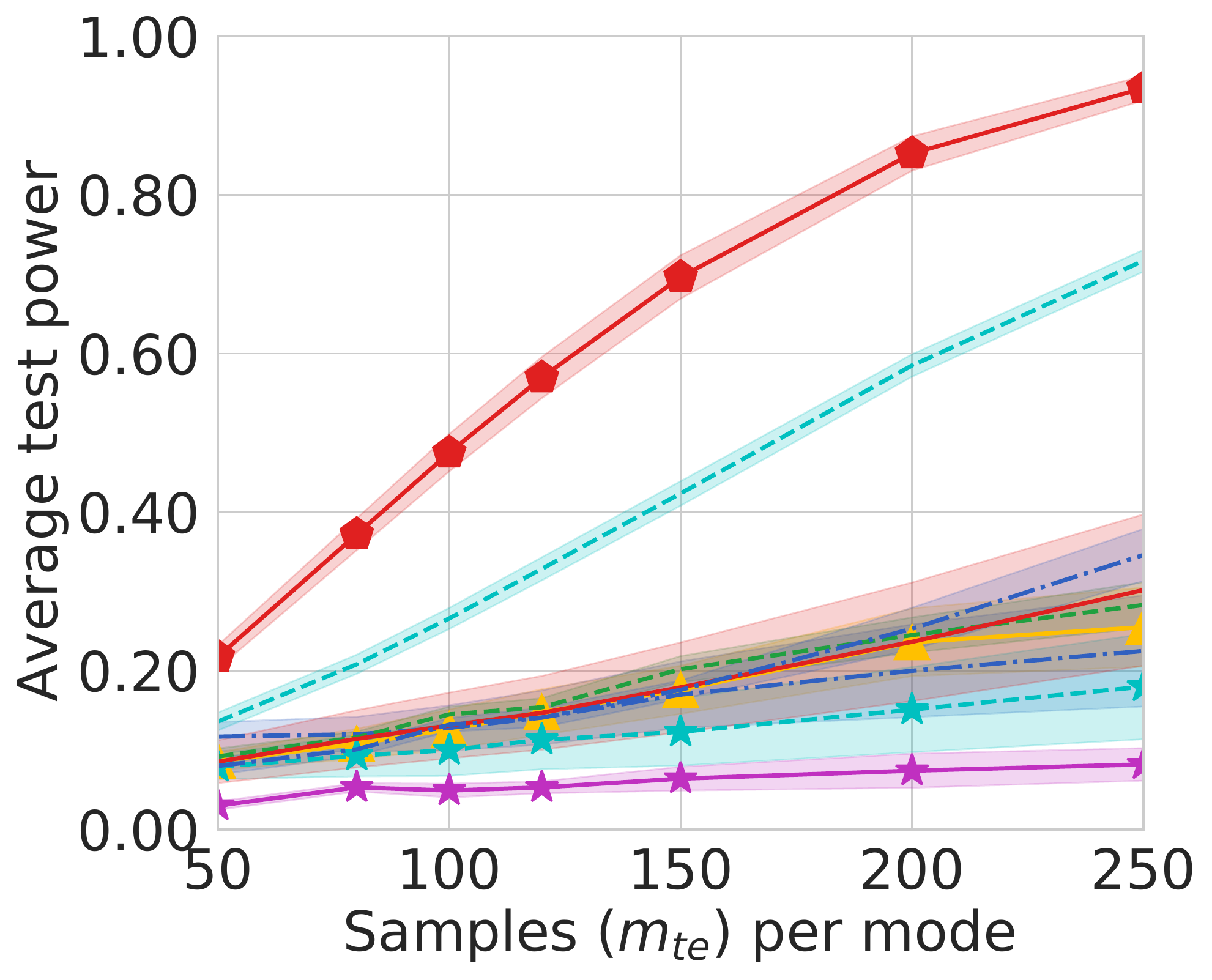}}
        \subfigure[Test power when $m_{tr}=150$.]
        {\includegraphics[width=0.32\textwidth]{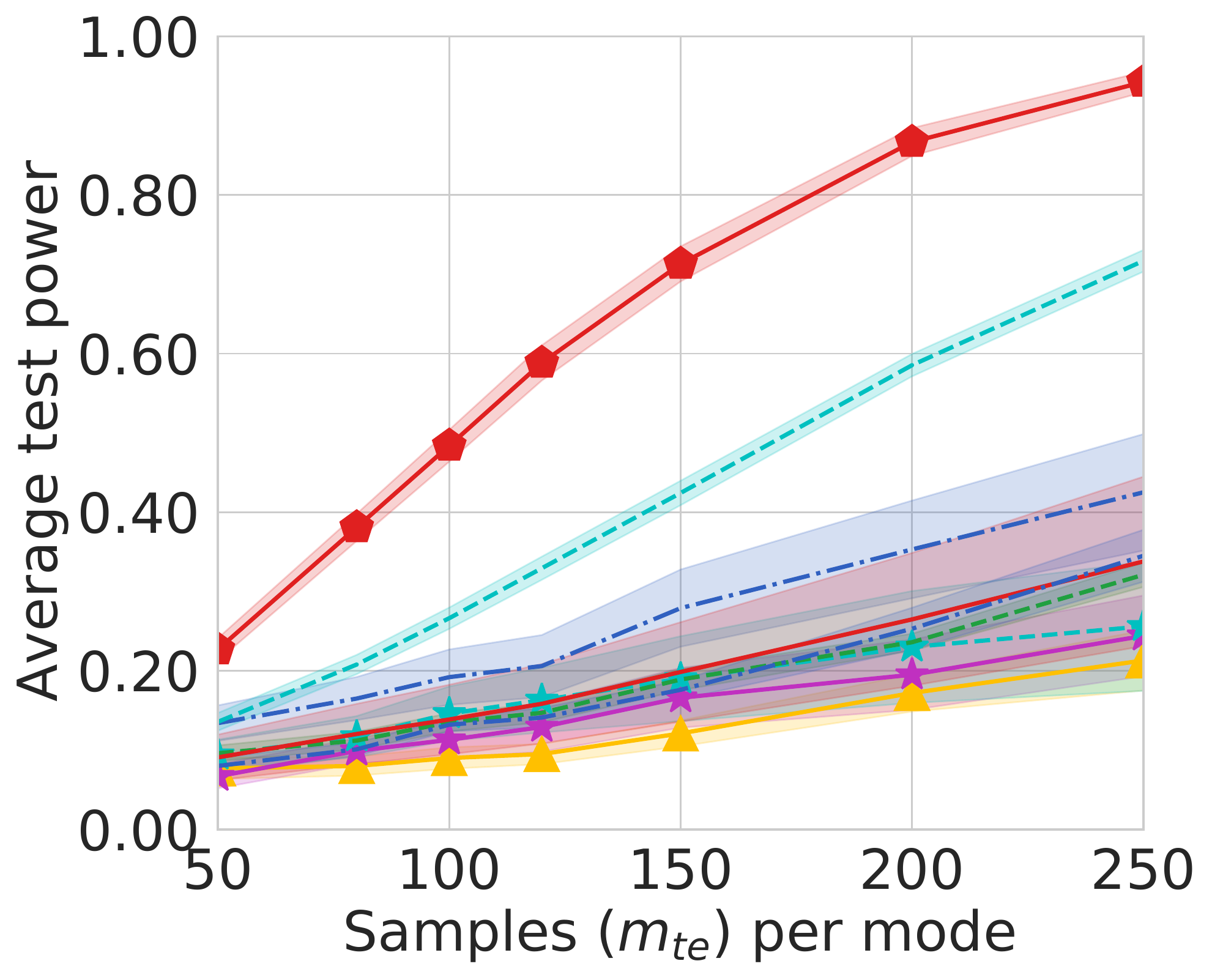}}
        \caption{Test power on synthetic datasets for $\alpha=0.05$.
        Average test power when increasing the number of testing samples $m_{te}$ while only using the $50$ (a), $100$ (b) and $150$ (c) training samples per mode (i.e., $m_{tr} = 50, 100, 150$).
        Shaded regions show standard errors for the mean.
        }
        \label{fig:results_syn}
    \end{center}
    \vspace{-1em}
\end{figure*}

\subsection{Distinguishing CIFAR-10 or -100 from CIFAR-10.1}

We distinguish the standard datasets of CIFAR-10 and CIFAR-100 \citep{cifar10} from the attempted replication CIFAR-10.1 \cite{recht2019do}, similar to \citet{liu2020learning}. Because only a relatively small number of CIFAR-10.1 samples are available, it is of interest to see whether by meta-training only on CIFAR-10's training set (as described in \cref{Asec:exp_set}), we can find a good test to distinguish CIFAR-10.1, with $m^{tr} \in \{ 100, 200 \}$. Testing samples (i.e., $S_\P^{te}$ and $S_\Q^{te}$) are from test sets of each dataset. We report test powers of all tests with $200, 500, 900$ testing samples in \cref{tab:C10vsC10.1} (CIFAR-10 compared to CIFAR-10.1) and \cref{tab:C100vsC10.1} (CIFAR-100 compared to CIFAR-10.1). Since \citet{liu2020learning} have shown that CIFAR-10 and CIFAR-10.1 come from different distributions, higher test power is better in both tables. The results demonstrate that our methods have much higher test power than baselines, which is strong evidence that leveraging samples from related tasks can boost test power significantly. Interestingly, C2ST tests almost entirely fail in this setting (as also seen by \citet[Appendix B.2.8]{recht2019do}); it is hard to learn useful information with only a few data points. In \cref{Asec:exp:CIFAR100}, we also report results when meta-samples are generated by the training set of CIFAR-100 dataset.
\begin{table}[!t]
  \centering
  \scriptsize
  \caption{Test power of tests on CIFAR-10 vs CIFAR-10.1 
  given very limited training data ($\alpha=0.05$, $m_{tr}=100,200$). The $m_{te}$ represents number of samples when testing. 
  Bold represents the highest mean per column.}\label{tab:C10vsC10.1}%
    \begin{tabular}{c|rrrrrrr}
    \toprule
    \multirow{2}[4]{*}{Methods} & \multicolumn{3}{c}{$m_{tr}=100$} &       & \multicolumn{3}{c}{$m_{tr}=200$} \\
\cmidrule{2-4}\cmidrule{6-8}          & \multicolumn{1}{l}{$m_{te}=200$} & \multicolumn{1}{l}{$m_{te}=500$} & \multicolumn{1}{l}{$m_{te}=900$} &       & \multicolumn{1}{l}{$m_{te}=200$} & \multicolumn{1}{l}{$m_{te}=500$} & \multicolumn{1}{l}{$m_{te}=900$} \\
    \midrule
    ME    &\mnstd{0.084}{0.009}&\mnstd{0.096}{0.016}&\mnstd{0.160}{0.035}&       &\mnstd{0.104}{0.013}&\mnstd{0.202}{0.020}&\mnstd{0.326}{0.039}  \\
    SCF   &\mnstd{0.047}{0.013}&\mnstd{0.037}{0.011}&\mnstd{0.047}{0.015}&       &\mnstd{0.026}{0.009}&\mnstd{0.018}{0.006}&\mnstd{0.026}{0.012}  \\
    C2ST-S &\mnstd{0.059}{0.009}&\mnstd{0.062}{0.007}&\mnstd{0.059}{0.007}&       &\mnstd{0.052}{0.011}&\mnstd{0.054}{0.011}&\mnstd{0.057}{0.008}  \\
    C2ST-L &\mnstd{0.064}{0.009}&\mnstd{0.064}{0.006}&\mnstd{0.063}{0.007}&       &\mnstd{0.075}{0.014}&\mnstd{0.066}{0.011}&\mnstd{0.067}{0.008}  \\
    MMD-O &\mnstd{0.091}{0.011}&\mnstd{0.141}{0.009}&\mnstd{0.279}{0.018}&       &\mnstd{0.084}{0.007}&\mnstd{0.160}{0.011}&\mnstd{0.319}{0.020}  \\
    MMD-D &\mnstd{0.104}{0.007}&\mnstd{0.222}{0.020}&\mnstd{0.418}{0.046}&       &\mnstd{0.117}{0.013}&\mnstd{0.226}{0.021}&\mnstd{0.444}{0.037}  \\
    \midrule
    \midrule
    AGT-KL &\mnstd{0.170}{0.032}&\mnstd{0.457}{0.052}&\mnstd{0.765}{0.045}&       &\mnstd{0.152}{0.023}&\mnstd{0.463}{0.060}&\mnstd{0.778}{0.050}  \\
    Meta-KL &\mnstd{0.245}{0.010}&\mnstd{0.671}{0.026}&\mnstd{0.959}{0.013}&       &\mnstd{0.226}{0.015}&\mnstd{0.668}{0.032}&\mnstd{0.972}{0.006}  \\
    Meta-MKL &\mnstd{\bf 0.277}{0.016}&\mnstd{\bf 0.728}{0.020}&\mnstd{\bf 0.973}{0.008}&       &\mnstd{\bf 0.255}{0.020}&\mnstd{\bf 0.724}{0.026}&\mnstd{\bf 0.993}{0.003}  \\
    \bottomrule
    \end{tabular}%
  \vspace{-1em}
\end{table}%

\begin{table}[!t]
  \centering
  \scriptsize
  \caption{Test power of tests on CIFAR-100 vs CIFAR-10.1  given very limited training data ($\alpha=0.05$, $m_{tr}=100,200$). The $m_{te}$ represents number of samples when testing. 
  Bold represents the highest mean per column.}\label{tab:C100vsC10.1}
    \begin{tabular}{c|rrrrrrr}
    \toprule
    \multirow{2}[4]{*}{Methods} & \multicolumn{3}{c}{$m_{tr}=100$} &       & \multicolumn{3}{c}{$m_{tr}=200$} \\
\cmidrule{2-4}\cmidrule{6-8}          & \multicolumn{1}{l}{$m_{te}=200$} & \multicolumn{1}{l}{$m_{te}=500$} & \multicolumn{1}{l}{$m_{te}=900$} &       & \multicolumn{1}{l}{$m_{te}=200$} & \multicolumn{1}{l}{$m_{te}=500$} & \multicolumn{1}{l}{$m_{te}=900$} \\
    \midrule
    ME    &\mnstd{0.211}{0.020}&\mnstd{0.459}{0.045}&\mnstd{0.751}{0.054}&       &\mnstd{0.236}{0.033}&\mnstd{0.512}{0.076}&\mnstd{0.744}{0.090}  \\
    SCF   &\mnstd{0.076}{0.027}&\mnstd{0.132}{0.050}&\mnstd{0.240}{0.095}&       &\mnstd{0.136}{0.036}&\mnstd{0.245}{0.066}&\mnstd{0.416}{0.114}  \\
    C2ST-S &\mnstd{0.064}{0.007}&\mnstd{0.063}{0.010}&\mnstd{0.067}{0.008} &      &\mnstd{0.324}{0.034}&\mnstd{0.237}{0.030}&\mnstd{0.215}{0.023}  \\
    C2ST-L &\mnstd{0.089}{0.010}&\mnstd{0.077}{0.010}&\mnstd{0.075}{0.010}&       &\mnstd{0.378}{0.042}&\mnstd{0.273}{0.032}&\mnstd{0.262}{0.023}  \\
    MMD-O &\mnstd{0.214}{0.012}&\mnstd{0.624}{0.013}&\mnstd{0.970}{0.005} &      &\mnstd{0.199}{0.016}&\mnstd{0.614}{0.017}&\mnstd{0.965}{0.006}  \\
    MMD-D &\mnstd{0.244}{0.011}&\mnstd{0.644}{0.030}&\mnstd{0.970}{0.010} &      &\mnstd{0.223}{0.016}&\mnstd{0.627}{0.031}&\mnstd{0.975}{0.006}  \\
    \midrule
    \midrule
    AGT-KL &\mnstd{0.596}{0.044}&\mnstd{0.979}{0.010}&\mnstd{\bf 1.000}{0.000} &      &\mnstd{0.635}{0.038}&\mnstd{\bf 0.994}{0.002}&\mnstd{\bf 1.000}{0.000}  \\
    Meta-KL &\mnstd{0.771}{0.018}&\mnstd{0.999}{0.001}&\mnstd{\bf 1.000}{0.000} &      &\mnstd{0.806}{0.017}&\mnstd{\bf 1.000}{0.000}&\mnstd{\bf 1.000}{0.000}  \\
    Meta-MKL &\mnstd{\bf 0.820}{0.015}&\mnstd{\bf 1.000}{0.000}&\mnstd{\bf 1.000}{0.000} &      &\mnstd{\bf 0.838}{0.017}&\mnstd{\bf 1.000}{0.000}&\mnstd{\bf 1.000}{0.000}  \\
    \bottomrule
    \end{tabular}%
  \label{tab:addlabel}%
  \vspace{-1em}
\end{table}%

\subsection{Analysis of Closeness between Meta Training and Testing}\label{sec:closeness}

This subsection studies how closeness between related tasks and the target task affects test powers of our tests. Given the target task $\mathcal{T}$ in synthetic datasets, we define tasks $\bm{T}$ with closeness $C$ as
\begin{align}
    \bm{T}(C)=\{\mathcal{T}^i:=(\P,\Q((0.6-C)+0.1\times i/N))\}_{i=1}^{N}.
\end{align}
It is clear that $\bm{T}(0)$ will contain our target task $\mathcal{T}$ (i.e., the closeness is zero). 
We also estimate the $\gamma$-relatedness between the target task and $\bm{T}(C)$, where $C\in\{0.1,0.2,0.3,0.4,0.5\}$, and the results show that $\gamma$ grows roughly linearly with $C$. Specifically, for $C = 0.1,0.2,0.3,0.4,0.5$, the estimate $\hat{\gamma}$ is $0.035, 0.067, 0.076, 0.104, 0.134$, respectively. (Details can be found in \cref{Asec:relatedness}.)

In \cref{fig:closeness}, we illustrate the test power of our tests when setting closeness $C$ to $0.1$, $0.2$ and $0.3$, respectively. It can be seen that Meta-MKL and Meta-KL outperforms AGT-KL all the figures, meaning that Meta-MKL and Meta-KL actually learn algorithms that can quickly adapt to new tasks. Another phenomenon is that the gap between test powers of meta based KL and AGT-KL will get smaller if the closeness is smaller, which is expected since AGT-KL has seen closer related tasks. 

\begin{figure*}[!t]
    \begin{center}
        \subfigure[Closeness is $0.1$.]
        {\includegraphics[width=0.32\textwidth]{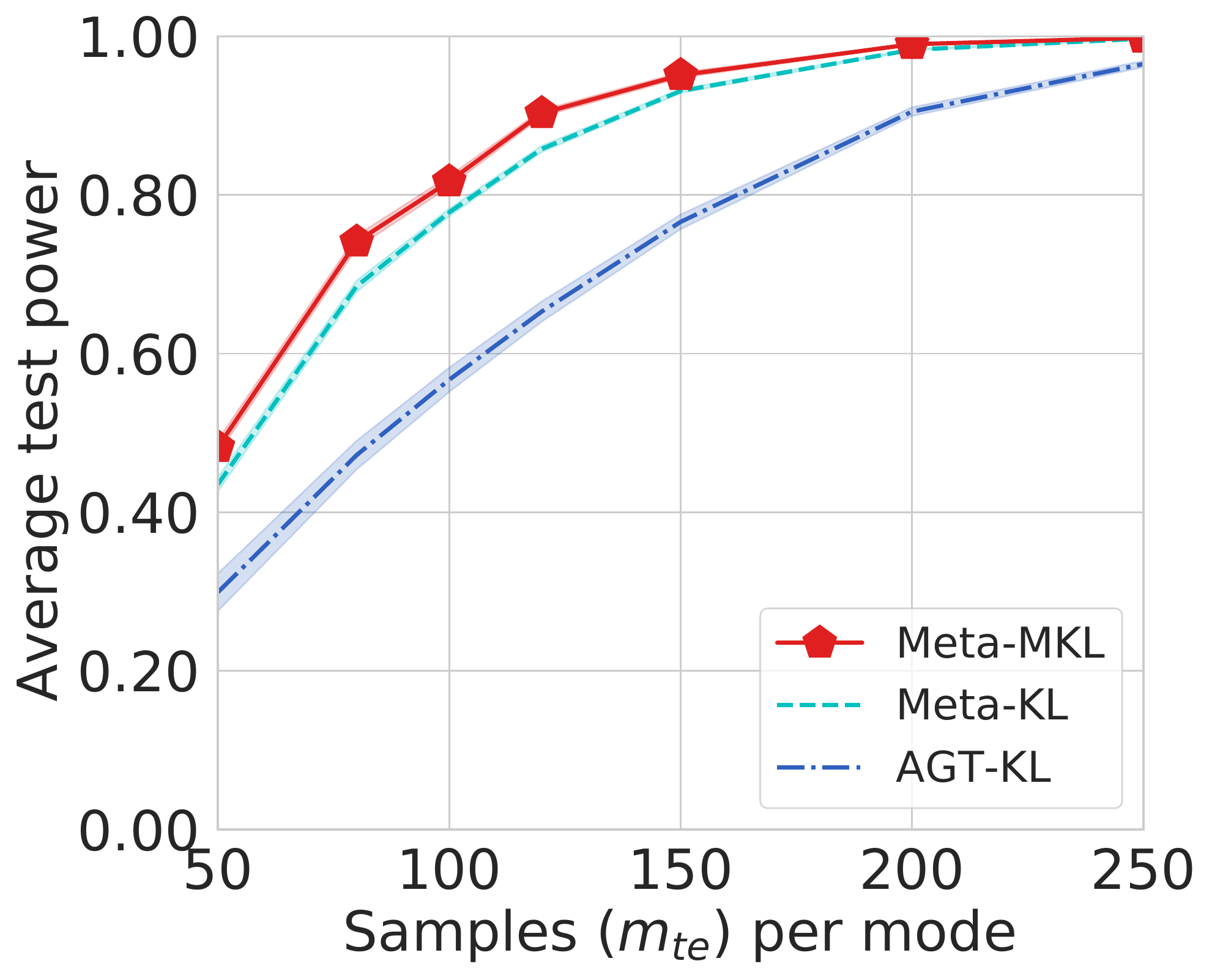}}
        \subfigure[Closeness is $0.2$.]
        {\includegraphics[width=0.32\textwidth]{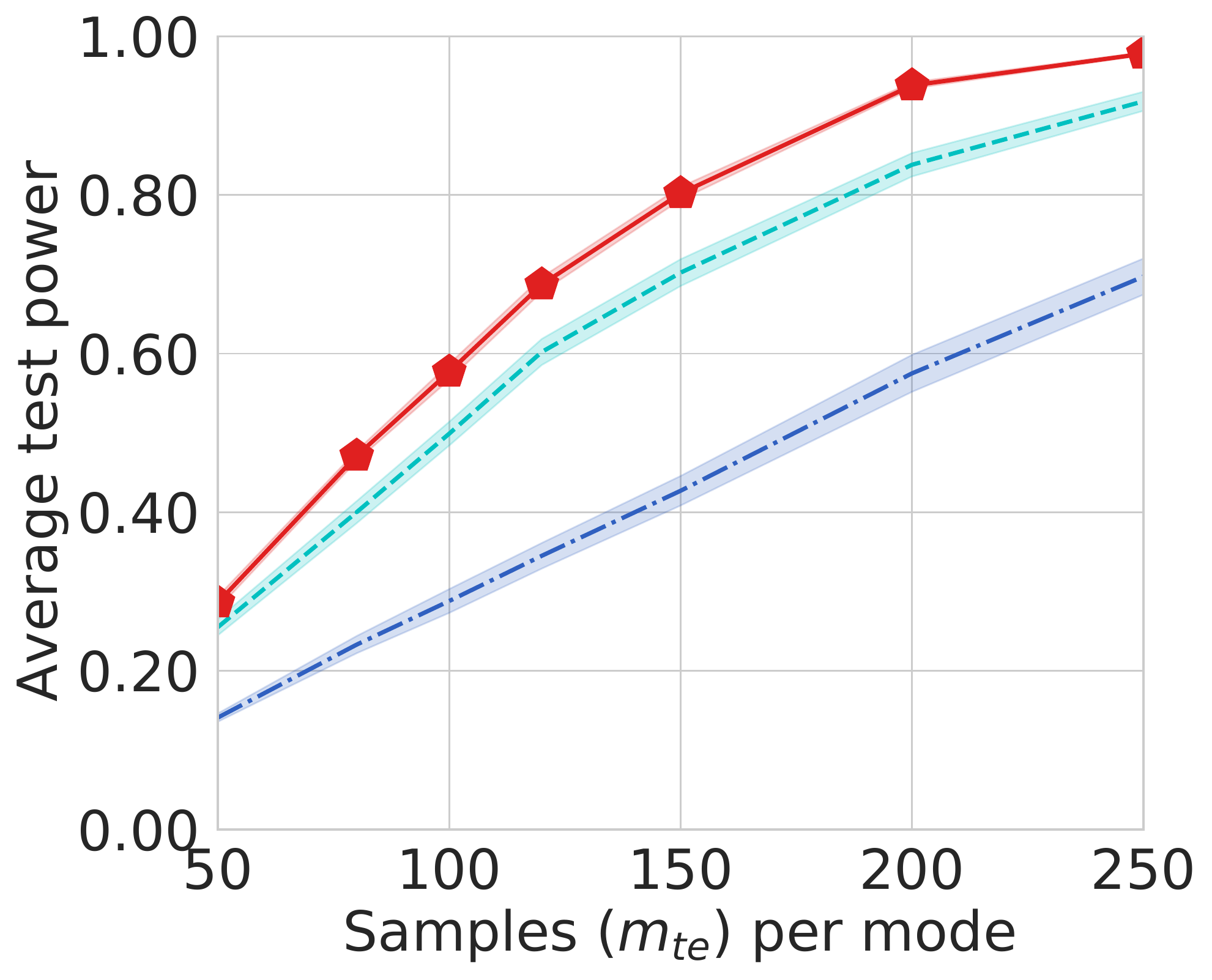}}
        \subfigure[Closeness is $0.3$.]
        {\includegraphics[width=0.32\textwidth]{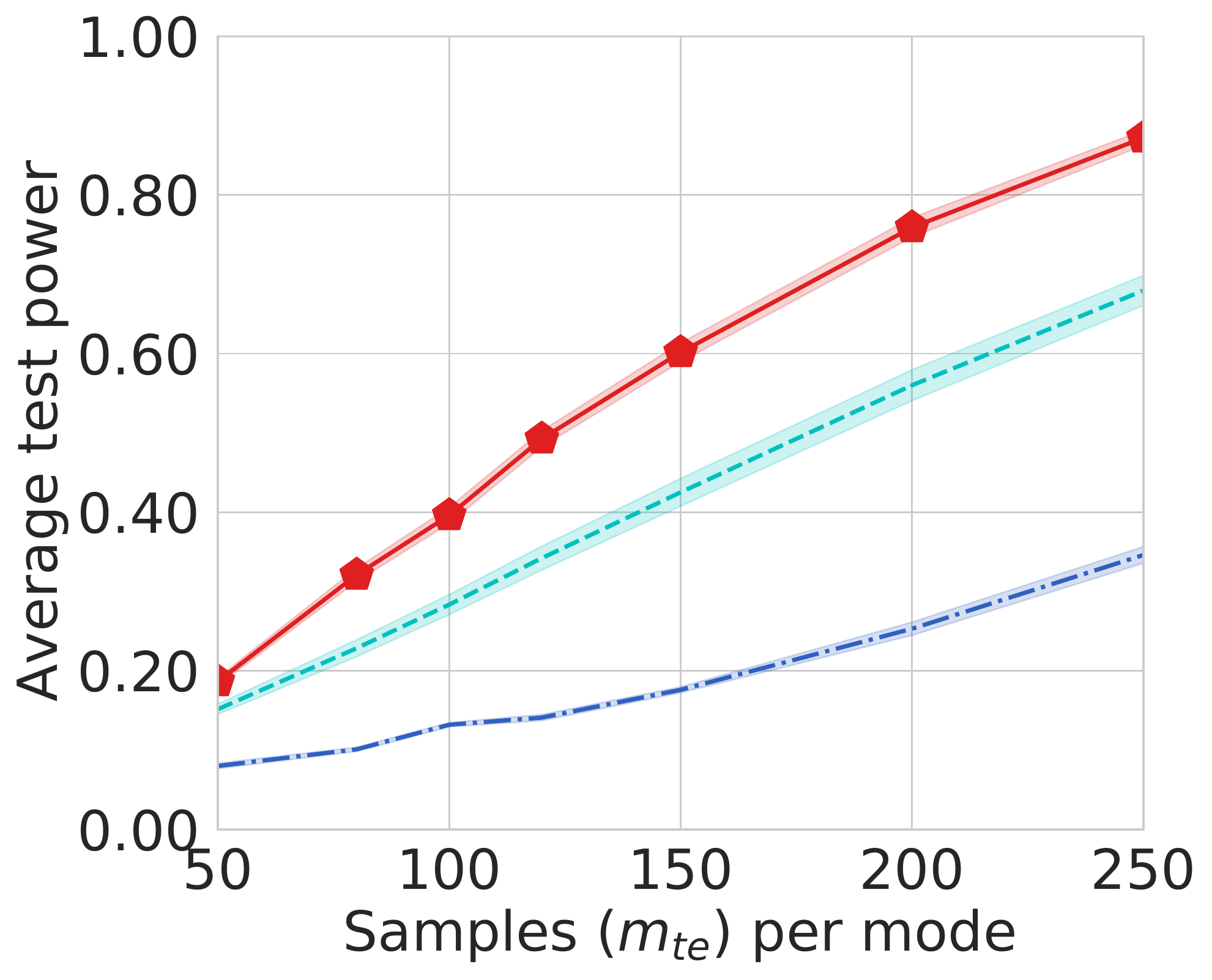}}
        \vspace{-1em}
        \caption{Test power when changing closeness between related tasks and the target task ($\alpha=0.05$).
        Average test power when closeness is $0.1$ (a), $0.2$ (b) and $0.3$ (c), where only using the $50$ training samples per mode (i.e., $m_{tr} = 50$).
        Shaded regions show standard errors for the mean.}
        \label{fig:closeness}
    \end{center}
    \vspace{-1em}
\end{figure*}

\subsection{Ablation Study}\label{sec:ablation}

In previous sections, we mainly compare with previous kernel-learning tests and have shown that the test power can be improved significantly by our proposed tests. We now show that each component in our tests is effective to improve the test power.

First, we show that MKL can help improve the test power, and the data splitting used in Meta-MKL is much better than using the recent test of \citet{Kubler2020learning}. The comparison has been made in synthetic datasets studied in \cref{sec:synthetic_data} and the results can be found in \cref{tab:abl}.
Meta-MKL-A is a test that takes all $\beta_i = 1 / N$ in \cref{alg:meta-mkl}, so that kernels are weighted equally. AGT-MKL uses multiple kernels in AGT-KL (learning weights like Meta-MKL), and AGT-MKL-A does not learn the weights but just assigns weights $1/N$ directly to all base kernels. Meta-MKL$_\text{SI}$ is a kernel two-sample test using the selective inference technique of \citet{Kubler2020learning} rather than data splitting in its $A_\theta$.

\begin{table}[!t]
  \centering
  \scriptsize
  \caption{The test power of various tests on the synthetic dataset, where for data-splitting methods we use only $10$ samples 
  to select kernels and $200$ to test. Meta-MKL-A, AGT-MKL-A and Meta-MKL$_\text{SI}$ use $210$ test samples. Bold represents the highest mean.}\label{tab:abl}
    \begin{tabular}{cccccccc}
    \toprule
    Tests & Meta-MKL & Meta-MKL-A & Meta-KL & AGT-MKL & AGT-MKL-A & AGT-KL & Meta-MKL$_\text{SI}$ \\
    \midrule
    Power & \mnstd{\bf 0.792}{0.014}&\mnstd{0.780}{0.012}&\mnstd{0.509}{0.046}&\mnstd{0.364}{0.016}&\mnstd{0.358}{0.021}&\mnstd{0.253}{0.025}&\mnstd{0.058}{0.008}  \\
    \bottomrule
    \end{tabular}%
  \vspace{-1em}
\end{table}%

From \cref{tab:abl}, we can see that
introducing the {multiple kernel learning} (MKL) scheme substantially improves test power, as it combines useful features learned from base kernels covering different aspects of the problems.
Moreover, learning with approximate test power with data-splitting in the meta-setting also outperforms the non-splitting testing procedure  MetaMKL$_\text{SI}$, since 
MetaMKL$_\text{SI}$ requires a linear estimator of MMD.
The result also indicates that leveraging related tasks is also important to improve the test power, even though we only need a small set of training samples.

Then, we show that the labels used for constructing meta-samples are useful in the CIFAR datasets. We consider another test here: \emph{MMD-D with all CIFAR-10} (MMD-D w/ AC), which runs the MMD-D test using the same sample from CIFAR-10 as did the meta-learning over all tasks together. Compared to Meta-MKL, Meta-KL and AGT-KL, MMD-D w/ AC does not use the label information contained in the CIFAR-10 dataset. The test power of MMD-D w/ AC is shown in \cref{tab:abl_label}. We can see that the test power of MMD-D w/ AC clearly outperforms MMD-D/MMD-O since MMD-D w/ AC sees more CIFAR-10 data in the training process. It is also clear that our methods still perform much better than MMD-D w/ AC. This result shows that the improvement of our tests does not solely come from seeing more data from CIFAR-10. Instead, the assigned labels for the meta-tasks indeed help.

\begin{table}[!t]
  \centering
  \scriptsize
  \caption{The test power of various tests on the CIFAR-10 vs CIFAR-10.1 task ($m_{tr}=100$). Bold represents the highest mean per row.}\label{tab:abl_label}
    \begin{tabular}{ccccccc}
    \toprule
    Tests & Meta-MKL & Meta-KL & AGT-KL & MMD-D w/ AC & MMD-D & MMD-O \\
    \midrule
    $m_{te}=200$ & \mnstd{\bf 0.277}{0.016}&\mnstd{0.245}{0.010}&\mnstd{0.170}{0.032}&\mnstd{0.134}{0.010}&\mnstd{0.104}{0.007}&\mnstd{0.091}{0.011}  \\
    $m_{te}=500$ & \mnstd{\bf 0.728}{0.020}&\mnstd{0.671}{0.026}&\mnstd{0.457}{0.052}&\mnstd{0.325}{0.028}&\mnstd{0.222}{0.020}&\mnstd{0.141}{0.009}  \\
    $m_{te}=900$ & \mnstd{\bf 0.973}{0.008}&\mnstd{0.959}{0.013}&\mnstd{0.765}{0.045}&\mnstd{0.745}{0.049}&\mnstd{0.418}{0.046}&\mnstd{0.279}{0.018}  \\
    \bottomrule
    \end{tabular}%
  \vspace{-1em}
\end{table}%

\section{Conclusions}

This paper proposes kernel-based non-parametric testing procedures to tackle practical two-sample problems where the sample size is small.
By meta-training on related tasks, our work opens a new paradigm of applying learning-to-learn schemes for testing problems,
and the potential of very accurate tests in some small-data regimes using our proposed algorithms.

It is worth noting, however, that statistical tests are perhaps particularly ripe for mis-application,
e.g.\ by over-interpreting small marginal differences between sample populations of people to claim ``inherent'' differences between large groups.
Future work focusing on reliable notions of interpretability in these types of tests is critical.
Meta-testing procedures, although they yield much better tests in our domains, may also introduce issues of their own:
any rejection of the null hypothesis will be statistically valid, but they favor identifying differences similar to those seen before, and so may worsen gaps in performance between ``well-represented'' differences and rarer ones.

\begin{ack}
FL and JL are supported by the Australian Research Council (ARC) under FL190100149.
WX is supported by the Gatsby Charitable Foundation and EPSRC grant under EP/T018445/1.
DJS is supported in part by the Natural Sciences and Engineering Research Council of Canada (NSERC) and the Canada CIFAR AI Chairs Program.
FL would also like to thank Dr.\ Yanbin Liu and Dr.\ Yiliao Song for productive discussions.
\end{ack}

\printbibliography

\clearpage

\appendix

\section{Proofs and Additional Analysis} \label{sec:proofs}

As a ``warm-up'' and because it is of independent interest, we will first study an adaptation algorithm which picks the single best kernel from the meta tasks:
\begin{definition}[Adaptation by choosing-one-best kernel]\label{def:choose_one_best}
With the set of base kernels $\{k_{1},\dots,k_{N}\}$, $\hat k = \argmax_i \hat J_{\lambda^{ne}}(S^{tr}_\P, S^{tr}_\Q; k_{i})$ is said to be the best kernel adaptation.
\end{definition}

\Cref{thm:approx-direct} shows uniform convergence of $\hat J_\lambda$ for direct adaptation of a kernel class, whether a deep kernel or multiple kernel learning.
For our analysis of choosing the best single kernel, however,
we only need uniform convergence over a finite set,
where we can obtain a slightly better rate.

\begin{lemma}[Generalization gap for choosing-one-best kernel adaptation]\label{lem:best_one_adaptation}
Let $k_i$ be a set of base kernels,
whose power criteria on the corresponding distributions are $J_i = J(\P, \Q; k_i)$,
and let $s' = \min_{i \in [N]} \sigma_{\althyp}^2(\P, \Q; k_i)$.
Denote the regularized estimates of these values as
$\hat J_i = \hat J_{\lambda}(S_{\P}, S_{\Q}; k_{i})$,
where $\abs{S_\P} = \abs{S_\Q} = m$
and $\lambda = m^{-1/3}$.
Then, with probability at least $1-\delta$,
\begin{equation}
    \max_{i \in [N]} \, \abs{\hat J_i - J_i}
    \le \zeta_m =
    \bigO
\left(\frac{1}{{s'}^2 m^{1/3}} \left[ \frac{1}{s'} + 
\frac{1}{\sqrt{m}} + \sqrt{\log\frac{N}{\delta}} \right]
\right)
.\end{equation}
\end{lemma}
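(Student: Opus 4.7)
The finite cardinality of $\{k_1,\dots,k_N\}$ makes a union bound over per-kernel concentration of $\hat J_\lambda$ sufficient, so I would bypass the metric-entropy machinery used in \cref{thm:approx-direct} entirely and replace its $\sqrt{D\log(R_\Omega m)}+L_k$ term by a cheap $\sqrt{\log(N/\delta)}$ arising from a union bound with per-kernel failure probability $\delta/N$.

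\textbf{Step 1 (pointwise concentration of numerator and denominator).} Fix $i$ and write $\widehat{M}:=\widehat{\MMD}_u^2(S_\P,S_\Q;k_i)$ and $\hat{s}^2:=\hat\sigma^2_{\althyp}(S_\P,S_\Q;k_i)$. Both are bounded functions of the two i.i.d.\ samples (in fact a $U$-statistic of order $2$ plus, for the variance estimate, a sum controllable by $U$-statistics of orders $2$ and $3$), so standard Hoeffding/McDiarmid bounds for $U$-statistics---invoked exactly as in the proof of Theorem~6 of \citet{liu2020learning} but at a single kernel rather than uniformly in $\omega$---yield, with probability at least $1-\delta'$,
\[
\abs{\widehat{M}-\MMD^2(\P,\Q;k_i)} = \bigO\!\left(\sqrt{\tfrac{\log(1/\delta')}{m}}\right),\qquad \abs{\hat s^2-\sigma^2_{\althyp}(\P,\Q;k_i)} = \bigO\!\left(\sqrt{\tfrac{\log(1/\delta')}{m}}\right) + \bigO(1/m),
\]
with constants depending only on the uniform kernel bound $\nu$.

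\textbf{Step 2 (lift to the ratio).} Using the decomposition
\[
\hat J_\lambda - J = \frac{\widehat{M}-\MMD^2}{\sqrt{\hat s^2+\lambda}} + \MMD^2\left(\frac{1}{\sqrt{\hat s^2+\lambda}}-\frac{1}{\sigma_{\althyp}}\right),
\]
I bound each piece. For the first, $\sqrt{\hat s^2+\lambda}\ge\sqrt{\lambda}=m^{-1/6}$ turns the $m^{-1/2}$ numerator into an $\bigO(m^{-1/3})$ contribution. For the second, the elementary inequality $\abs{1/\sqrt{u}-1/\sqrt{v}}\le \abs{u-v}/(2\min(u,v)^{3/2})$, combined with the hypothesis $\sigma^2_{\althyp}\ge s'$, bounds it by a constant times $(\abs{\hat s^2-\sigma^2}+\lambda)/(s')^{3/2}$; boundedness of the kernel gives $\MMD^2=\bigO(1)$. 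Combining with Step~1 and the choice $\lambda=m^{-1/3}$ produces the three additive terms in the bracket: the regularization bias $1/s'$, the higher-order $1/\sqrt{m}$ (the $\bigO(1/m)$ piece of Step~1 modulated by the $1/m^{1/3}$ prefactor), and $\sqrt{\log(1/\delta')}$.

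\textbf{Step 3 (union bound).} Applying Step~2 with $\delta'=\delta/N$ and taking a union bound over $i\in[N]$ turns $\log(1/\delta')$ into $\log(N/\delta)$ and yields $\zeta_m$ as stated. The principal subtlety---and the only place the proof could go wrong---is handling $1/\sqrt{\hat s^2+\lambda}$: one cannot Lipschitz $1/\sqrt{\cdot}$ near $0$, so the argument must exploit simultaneously the data-free lower bound $\sqrt{\lambda}$ supplied by regularization and the assumption-derived lower bound $\sqrt{s'}$ from the lemma's hypothesis, and the choice $\lambda=m^{-1/3}$ is precisely the one that balances the regularization bias $\bigO(\lambda/(s')^{3/2})$ against the estimation fluctuation $\bigO(m^{-1/2}/(s')^{3/2})$, giving the overall $m^{-1/3}$ rate.
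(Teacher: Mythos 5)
Your proposal is correct and follows essentially the same route as the paper: McDiarmid-type concentration of the MMD estimate and the variance estimate at each fixed kernel, a union bound over the $N$ kernels with per-kernel failure probability $\delta/N$ (replacing the covering-number term of \cref{thm:approx-direct} by $\sqrt{\log(N/\delta)}$), and a decomposition of the ratio's error into numerator fluctuation, denominator fluctuation, and regularization bias, balanced by $\lambda=m^{-1/3}$. The only point needing care is the one you already flag: in bounding $1/\sqrt{\hat s^2+\lambda}-1/\sigma_{\althyp}$ you cannot take $\min(u,v)\ge s'$, since only the population variance is lower-bounded; the paper's three-term split handles this by using the floor $\sqrt{\lambda}$ for the empirical factor and $s'$ for the population factors, which is exactly the fix you describe.
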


\begin{proof}
To bound $\max_{i \in [N]} \abs{\hat J_i - J_i}$, we consider high-probability bounds for concentration of $\hat \eta_{\omega}$ and $\hat \sigma^2_{\omega}$ with McDiarmid's inequality and a union bound, as developed within the proofs of Propositions 15 and 16 of \citet{liu2020learning}. With probability at least $1-\delta$, we have  
\[
\max_{i \in [N]} \abs{\hat \eta_i - \eta_i} \le \frac{16\nu}{\sqrt{2m}}\sqrt{\log\frac{4N}{\delta}},
\]
and
\[
\max_{i \in [N]} \abs{\hat \sigma^2_i - \sigma^2_i} \le 448\sqrt{\frac{2}{m}\log\frac{4N}{\delta}} + \frac{1152\nu^2}{m}.
\]
Then, taking $\sigma^2_{i,\lambda} = \sigma^2_{i} +\lambda$,
we can decompose the worst-case generalization error as
\begin{align*}
\max_{i \in [N]} \abs{\hat J_i - J_i}
&= \max_{i \in [N]} \Abs{\frac{\hat \eta_i}{\hat \sigma_{i,\lambda}} - \frac{\eta_i}{\sigma_{i}}} \\
& \le
\max_{i \in [N]} \Abs{\frac{\hat \eta_i}{\hat \sigma_{i,\lambda}} - \frac{\hat \eta_i}{\sigma_{i,\lambda}}}  + \max_{i \in [N]} \Abs{\frac{\hat \eta_i}{\sigma_{i,\lambda}} - \frac{\hat \eta_i}{\sigma_{i}}} + \max_{i \in [N]} \Abs{\frac{\hat \eta_i}{\sigma_{i}}- \frac{\eta_i}{\sigma_{i}}} 
\\
& \le 
\max_{i \in [N]} \frac{\abs{\hat \eta_i}}{\hat{\sigma}_{i,\lambda} {\sigma}_{i,\lambda} (\hat{\sigma}_{i,\lambda} + {\sigma}_{i,\lambda})}\abs{{\hat \sigma^2_{i}} - {\sigma^2_{i}}}  + \max_{i \in [N]} \Abs{\frac{\hat \eta_i}{\sigma_{i,\lambda}} - \frac{\hat \eta_i}{\sigma_{i}}} + \max_{i \in [N]} {\frac{\abs{\hat \eta_i - \eta_i}}{\sigma_{i}}}
\\
& \le
\frac{4\nu}{s'^2\sqrt{\lambda}}\max_{i\in [N]}\abs{{\hat \sigma^2_{i}} - {\sigma^2_{i}}} + \frac{2\nu}{s'^3}\lambda +\frac{1}{s'}\max_{i\in [N]}\abs{{\hat \eta_{i}} - {\eta_{i}}}
.\end{align*}
Taking the upper bound on the kernel to be constant, in our case $\nu=1$, the above equation reads
\[
\max_{i \in [N]} \abs{\hat{J}_i - J_i} =
\bigO\left( 
\frac{1}{s'^2\sqrt{\lambda}}\left[\sqrt{\frac{1}{m}\log \frac{N}{\delta}} + \frac{1}{m}\right] + \frac{\lambda}{s'^3} + \frac{1}{s'\sqrt{m}}\sqrt{\log \frac{N}{\delta}}
\right)
.\]
Taking the regularizer $\lambda = m^{-1/3}$ to achieve the best overall rate, 
\begin{align*}
\max_{i \in [N]} \abs{\hat{J}_i - J_i}
& =
\bigO\left( 
\frac{1}{s'^2 m^{ 1/ 3}}\left[\sqrt{\log \frac{N}{\delta}} + \frac{1}{\sqrt m}\right] + \frac{1}{s'^3 m^{ 1/ 3}} + \frac{1}{s'\sqrt{m}}\sqrt{\log \frac{N}{\delta}}
\right)\\
& =\bigO\left( \frac{1}{s'^2 m^{ 1/ 3}}\left[ 
\frac{1}{s'} + \sqrt{\log \frac{N}{\delta}} + \frac{1}{\sqrt{m}}
\right] \right)
.\qedhere\end{align*}
\end{proof}
Since the adaptation step is based on $m$ samples from the actual testing task, our generalization result is derived based on the sample size $m$. As explained in the main text, even though the sample size is still small, the adaptation result benefits from a much better trained base kernel set, giving rise to large $s'$ compared to $s$ from directly training from the deep kernel parameters with $m$ samples.

Given this building block, we proceed to state and prove the choosing-one-best kernel adaptation, \cref{thm:single-best}.

\begin{lemma} \label{thm:kstar-related}
Let $(\P, \Q)$ and $(\P^i, \Q^i)$ be two testing tasks which are $\gamma$-related (\cref{def:relatedness}),
and let $k^* \in \argmax_{k \in \K} J(\P, \Q; k)$
and $k_i^* \in \argmax_{k \in \K} J(\P^i, \Q^i; k)$.
Then
\[ 
    \abs{J(\P, \Q; k^*) - J(\P^i, \Q^i; k_i^*)} \le  \gamma
.\]
\end{lemma}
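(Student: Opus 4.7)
The plan is a short two-sided argument that chains the $\gamma$-relatedness bound with the optimality of $k^*$ and $k_i^*$. By \cref{def:relatedness}, for every $k \in \K$ we have $|J(\P,\Q;k) - J(\P^i,\Q^i;k)| \le \gamma$, so in particular this bound holds simultaneously at $k = k^*$ and at $k = k_i^*$.

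First I would establish one direction. Applying $\gamma$-relatedness at $k = k^*$ gives $J(\P^i,\Q^i; k^*) \ge J(\P,\Q; k^*) - \gamma$. Since $k_i^*$ maximizes $J(\P^i,\Q^i; \cdot)$ over $\K$, we get
\[
J(\P^i,\Q^i; k_i^*) \ge J(\P^i,\Q^i; k^*) \ge J(\P,\Q; k^*) - \gamma.
\]
Then I would run the symmetric argument starting from $k = k_i^*$: $\gamma$-relatedness gives $J(\P,\Q; k_i^*) \ge J(\P^i,\Q^i; k_i^*) - \gamma$, and optimality of $k^*$ on $(\P,\Q)$ gives $J(\P,\Q; k^*) \ge J(\P,\Q; k_i^*) \ge J(\P^i,\Q^i; k_i^*) - \gamma$.

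Combining the two inequalities yields $-\gamma \le J(\P,\Q; k^*) - J(\P^i,\Q^i; k_i^*) \le \gamma$, which is the claim. There is no real obstacle here; the lemma is essentially the standard fact that two functions that are uniformly $\gamma$-close have suprema that differ by at most $\gamma$, specialized to the case where the suprema are attained. The only subtlety worth a sentence in the written proof is noting that both $k^*$ and $k_i^*$ are assumed to lie in the common class $\K$ on which $\gamma$-relatedness is defined, so that the pointwise bound applies to each of them.
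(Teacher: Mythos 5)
Your proposal is correct and matches the paper's proof essentially line for line: both directions use the uniform $\gamma$-closeness at the single points $k^*$ and $k_i^*$ chained with the optimality of the respective maximizers. Nothing is missing.
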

\begin{proof}
We know that
$J(\P^i, \Q^i; k^*) \le J(\P^i, \Q^i; k_i^*)$
by the definition of $k_i^*$,
and that
$J(\P^i, \Q^i; k^*) \ge J(\P, \Q; k^*) - \gamma$
by $\gamma$-relatedness. 
Putting together we have 
\[
J(\P, \Q; k^*) - \gamma
\le J(\P^i, \Q^i; k^*)
\le J(\P^i, \Q^i; k_i^*)
,\]
and so
$J(\P, \Q; k^*) - J(\P^i, \Q^i; k_i^*) \leq \gamma$.

Similarly, we have 
\[
J(\P^i, \Q^i; k_i^*) - \gamma
\le J(\P, \Q; k_i^*)
\le J(\P, \Q; k^*)
\]
and so $-\gamma \le  J(\P, \Q; k^*) - J(\P^i, \Q^i; k_i^*)$.
\end{proof}

\begin{theorem}[Adaptation by choosing one best base kernel]\label{thm:single-best}
Suppose we have $N$ meta-training tasks $\{(\P^i, \Q^i)\}_{i \in [N]}$,
each with corresponding optimal kernels $k_i^* \in \argmax_{k \in \K} J(\P^i, \Q^i; k)$,
and learn kernels $\hat k_i \in \argmax_{k \in \K} \hat J_\lambda(S_{\P^i}, S_{\Q^i}; k)$
based on $n$ samples
in the setting of \cref{thm:approx-direct}.
Let $(\P, \Q)$ be a test task from which we observe $m$ samples $S_\P, S_\Q$.
Let $j$ be the index of a task $(\P^j, \Q^j)$ which is $\gamma$-related to $(\P, \Q)$.
Then, with probability at least $1 - 2 \delta$,
\[
    J(\P, \Q; k^*) - J(\P, \Q; \hat k)
    \le 2 (\gamma + \xi_n^j + \zeta_m)
\]
where $\xi_n^j$ is the bound of \cref{thm:approx-direct} for $(\P^j, \Q^j)$,
while $\zeta_m$ is the bound of \cref{lem:best_one_adaptation} for $(\P, \Q)$.
\end{theorem}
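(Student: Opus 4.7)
The plan is to chain together four inequalities: two applications of $\gamma$-relatedness to move between the meta-task $(\P^j,\Q^j)$ and the target task $(\P,\Q)$, one application of \cref{thm:approx-direct} on task $j$ to control the meta-trained kernel $\hat k_j$, and one application of \cref{lem:best_one_adaptation} on the target task to justify that the chosen $\hat k$ is at least as good in $J$-value as $\hat k_j$ up to a $\zeta_m$ slack.

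More precisely, I would first invoke \cref{thm:kstar-related} (or just the definition of $\gamma$-relatedness together with optimality of $k_j^*$) to obtain
\[
J(\P,\Q;k^*)\ \le\ J(\P^j,\Q^j;k_j^*)+\gamma.
\]
Next, I would apply \cref{thm:approx-direct} to the meta-training task $(\P^j,\Q^j)$ with $n$ samples: with probability at least $1-\delta$, the empirical maximizer $\hat k_j$ satisfies $J(\P^j,\Q^j;k_j^*)-J(\P^j,\Q^j;\hat k_j)\le 2\xi_n^j$. Then a second use of $\gamma$-relatedness, now applied to the fixed kernel $\hat k_j$, yields $J(\P^j,\Q^j;\hat k_j)\le J(\P,\Q;\hat k_j)+\gamma$.

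The final step moves from $\hat k_j$ to the actually selected $\hat k$. By \cref{lem:best_one_adaptation}, with probability at least $1-\delta$ (taken jointly with the previous event via a union bound, producing the overall $1-2\delta$), the empirical criterion $\hat J_{\lambda^{ne}}$ approximates $J(\P,\Q;\cdot)$ uniformly over the finite set $\{\hat k_1,\dots,\hat k_N\}$ within $\zeta_m$. Since $\hat k$ is defined as the $\argmax$ of $\hat J_{\lambda^{ne}}$ over this set (\cref{def:choose_one_best}), the standard ``empirical-best is nearly true-best'' argument gives
\[
J(\P,\Q;\hat k_j)\ \le\ \hat J_{\lambda^{ne}}(\cdot;\hat k_j)+\zeta_m\ \le\ \hat J_{\lambda^{ne}}(\cdot;\hat k)+\zeta_m\ \le\ J(\P,\Q;\hat k)+2\zeta_m.
\]
Summing the four inequalities yields the stated $2(\gamma+\xi_n^j+\zeta_m)$ bound.

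The steps are all routine once the decomposition is identified; the only real care is in the probabilistic accounting. The event of \cref{thm:approx-direct} applied to task $j$ and the event of \cref{lem:best_one_adaptation} applied to $(\P,\Q)$ are independent because they concern disjoint sample sets ($n$ meta-training samples from task $j$ versus $m$ target samples), so a union bound costs only $2\delta$. I expect the only mild subtlety is making sure the $\gamma$-relatedness is applied correctly on both sides (once to compare optima, and once to compare the value of the fixed kernel $\hat k_j$ across the two tasks), but no quantitative calculation beyond what is already packaged in \cref{thm:approx-direct} and \cref{lem:best_one_adaptation} is required.
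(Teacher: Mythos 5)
Your proposal is correct and matches the paper's own argument essentially step for step: the same four-part chain (relate optima via \cref{thm:kstar-related}, apply \cref{thm:approx-direct} on task $j$, apply $\gamma$-relatedness to the fixed kernel $\hat k_j$, and use the finite-set uniform convergence of \cref{lem:best_one_adaptation} together with optimality of $\hat k$ in the empirical criterion), with the same union-bound accounting yielding $1-2\delta$. No discrepancies to report.
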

\begin{proof}
We will assume that $(S_\P, S_\Q)$ satisfies the uniform convergence condition of \cref{lem:best_one_adaptation},
and $(S_{\P^j}, S_{\Q^j})$ that of \cref{thm:approx-direct},
which happens with probability at least $1 - 2 \delta$.
We use the decomposition
\begin{multline*}
       J(\P, \Q; k^*) - J(\P, \Q; \hat k)
     = \underbrace{J(\P, \Q; k^*) - J(\P^j, \Q^j; k_j^*)}_{(a)}
     + \underbrace{J(\P^j, \Q^j; k_j^*) - J(\P^j, \Q^j; \hat k_j)}_{(b)}
\\
     + \underbrace{J(\P^j, \Q^j; \hat k_j) - J(\P, \Q; \hat k_j)}_{(c)}
     + \underbrace{J(\P, \Q; \hat k_j) - \hat J_\lambda(S_\P, S_\Q; \hat k_j)}_{(d)}
\\
     + \underbrace{\hat J_\lambda(S_\P, S_\Q; \hat k_j) - \hat J_\lambda(S_\P, S_\Q; \hat k)}_{(e)}
     + \underbrace{\hat J_\lambda(S_\P, S_\Q; \hat k) - J_\lambda(S_\P, S_\Q; \hat k)}_{(f)}
.\end{multline*}
\Cref{thm:kstar-related} upper-bounds (a) by $\gamma$,
while \cref{thm:approx-direct} upper-bounds (b) by $2 \xi_n^j$,
and (c) is at most $\gamma$ by the definition of $\gamma$-relatedness.
The terms (d) and (f) are each at most $\zeta_m$ by \cref{lem:best_one_adaptation},
while (e) is at most 0 by the definition of $\hat k$.
The desired bound follows.
\end{proof}

\paragraph{Proof of \cref{thm:mkl} in the main text}
\begin{proof}
Let $\beta^* \in \argmax_{\beta \in \R_{\ge 0}^N} J(\P, \Q; \sum_i \beta^*_i \hat k_i)$,
and then make the decomposition
\begin{multline*}
    J(\P, \Q; k^*) - J(\P, \Q; \hat k_{\hat \beta})
    \\
    = \underbrace{J(\P, \Q; k^*) - J(\P, \Q; \hat k_j)}_{(i)}
    + \underbrace{J(\P, \Q; \hat k_j) - J(\P, \Q; \hat k_{\beta^*})}_{(ii)}
    + \underbrace{J(\P, \Q; \hat k_{\beta^*}) - J(\P, \Q; \hat k_{\hat \beta})}_{(iii)}
.\end{multline*}
Term (i) is identical to terms (a) through (c) of \cref{thm:single-best},
and is upper-bounded by $2(\gamma + \xi_n^j)$
conditional only on the convergence event for $(S_{\P^j}, S_{\Q^j})$.
Term (ii) is at most 0, since $\hat k_j$ corresponds to choosing the $j$th standard unit vector for $\beta$, so $\beta^*$ is at least as good as that choice of $\beta$.
Finally, term (iii) is covered by \cref{thm:approx-direct},
as in Proposition 8 of \citet{liu2020learning},
giving an upper bound with probability $1 - \delta$ on $(S_\P, S_\Q)$.
\end{proof}

\section{Experimental Details and Additional Experiments}\label{Asec:exp_set}

\subsection{Datasets and Configurations}\label{Asec:configuration}

\cref{fig:CIFAR10} shows samples from \emph{CIFAR-10} and \emph{CIFAR-10.1}. \emph{CIFAR-10.1} is available from \url{https://github.com/modestyachts/CIFAR-10.1/tree/master/datasets} (we use \texttt{cifar10.1\_v4\_data.npy}). This new test set contains $2,031$ images from TinyImages \citep{torralba2008tinyimages}.

We implement all methods with Pytorch 1.1 (Python 3.8) using an NIVIDIA Quadro RTX 8000 GPU, and set up our experiments according to the protocol proposed by \citet{liu2020learning}. In the following, we demonstrate our configurations in detail. We run ME and SCF using the official code \citep{Jitkrittum2016}, and use \citeauthor{liu2020learning}'s implementations of most other tests. We use permutation test to compute $p$-values of C2ST-S, C2ST-L, MMD-O, MMD-D, AGT-KL, Meta-KL, Meta-MKL and all tests in \cref{tab:abl}. We set $\alpha=0.05$ for all experiments. We use a deep neural network $g \circ \phi$ as the classifier in C2ST-S and C2ST-L,
where $g$ is a two-layer fully-connected binary classifier,
and $\phi$ is the feature extraction architecture also used in
the deep kernels in MMD-D, AGT-KL, Meta-KL, Meta-MKL, and methods in \cref{tab:abl} and \cref{tab:abl_label}.

For \emph{HDGM}, $\phi$ is a five-layer fully-connected neural network. The number of neurons in hidden and output layers of $\phi$ are set to $3\times d$, where $d$ is the dimension of samples. These neurons use softplus activations, $\log(1+\exp(x))$. For \emph{CIFAR}, $\phi$ is a convolutional neural network (CNN) with four convolutional layers and one fully-connected layer. The structure of the CNN follows the structure of the feature extractor in the discriminator of DCGAN \citep{DCGAN_Radford} (see \cref{fig:MMD_CIFAR_F,fig:C2ST_CIFAR_F} for the structure of $\phi$ in our tests, MMD-D, C2ST-S and C2ST-L). We randomly select data from two different classes to form the two samples ($n_i$ is $100$) as meta-samples in CIFAR-10/CIFAR-100. Thus, there are $\textnormal{C}_{10}^2$ and $\textnormal{C}_{100}^2$ tasks when running \cref{alg:meta-kl} on training sets of CIFAR-10 and CIFAR-100. For each task, we have $200$ instances. Note that, for results on synthetic data, we repeat experiments $20$ times to avoid the effects caused by the generation noise. DCGAN code is from \url{https://github.com/eriklindernoren/PyTorch-GAN/blob/master/implementations/dcgan/dcgan.py}.

We use the Adam optimizer \citep{Adam:optimizer} to optimize network and/or kernel parameters. Hyperparameter selection for ME, SCF, C2ST-S, C2ST-L, MMD-O and MMD-D follows \citet{liu2020learning}. In \cref{alg:meta-kl}, $\lambda$ is set to $10^{-8}$, and the update learning rate $\eta$ (line $2$) is set to $0.8$, and the meta-update learning rate is set to $0.01$. Batch size is set to $10$, and the maximum number of epoch is set to $1,000$. In line $6$ in \cref{alg:meta-kl}, we use Adam optimizer with default hyperparameters. In line $1$ in \cref{alg:meta-mkl}, we adopt Adam optimizer with default hyperparameters and set learning rate to $0.01$. Besides, we use the algorithm from \cref{alg:meta-kl} to initialize parameters in the optimization algorithm. To avoid the computational cost caused by the large number of meta-tasks, we randomly select $10$ tasks in Meta-MKL rather than all $N$ tasks. Meanwhile, to ensure that we can get help from all tasks, we will use the algorithms outputted by Meta-KL to optimize the deep kernels (line $1$ in \cref{alg:meta-mkl}) in the selected $10$ tasks. The algorithms outputted by Meta-KL are helpful to find the best deep kernel for each task. Note that we do not use dropout. 

\begin{figure}[!t]
    \begin{center}
        \subfigure[\emph{CIFAR-10} test set]
        {\includegraphics[width=0.4\textwidth]{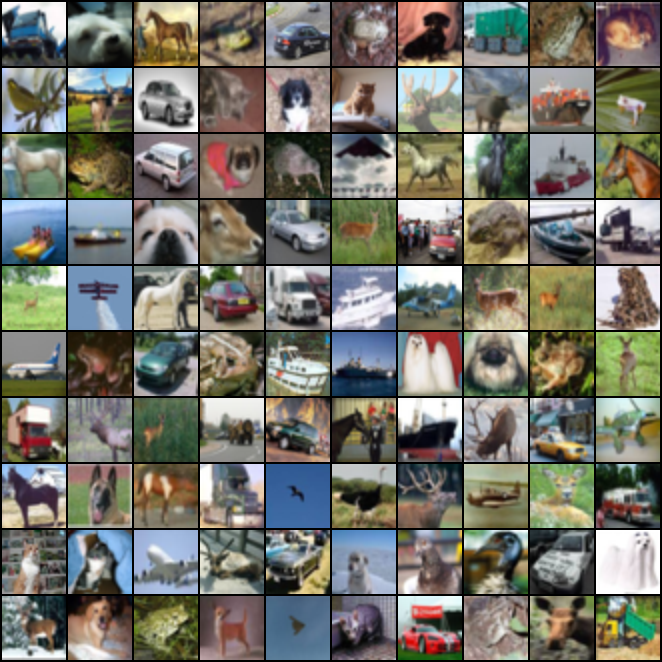}}
        \subfigure[\emph{CIFAR-10.1} test set]
        {\includegraphics[width=0.4\textwidth]{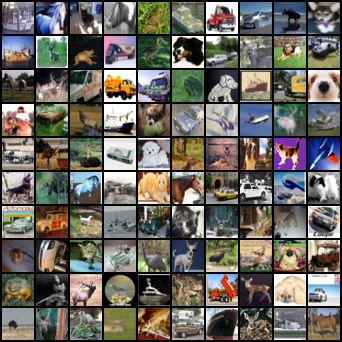}}
        \caption{Images from \emph{CIFAR-10} test set and the new \emph{CIFAR-10.1} test set \citep{recht2019do}.}  \label{fig:CIFAR10}
    \end{center}
    \vspace{-0.5cm}
\end{figure}

\begin{figure}[!t]
    \begin{center}
        \includegraphics[width=0.95\textwidth]{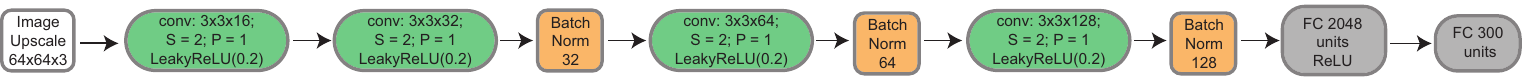}
        \caption{The structure of $\phi$ in our tests and MMD-D on \emph{CIFAR}. The kernel size of each convolutional layer is $3$; stride (S) is set to $2$; padding (P) is set to 1. We do not use dropout in all layers. In the first layer, we will convert the \emph{CIFAR} images from $32\times 32\times 3$ to $64\times 64\times 3$. Best viewed zoomed in.}  \label{fig:MMD_CIFAR_F}
    \end{center}
\end{figure}

\begin{figure}[!t]
    \begin{center}
        \includegraphics[width=0.95\textwidth]{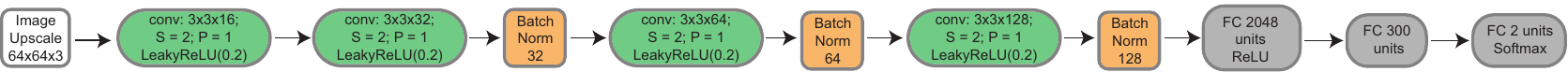}
        \caption{The structure of classifier $F$ in C2ST-S and C2ST-L on \emph{CIFAR}. The kernel size of each convolutional layer is $3$; stride (S) is set to $2$; padding (P) is set to 1. We do not use dropout. Best viewed zoomed in.} \label{fig:C2ST_CIFAR_F}
    \end{center}
\end{figure}

\subsection{Analysis of the Number of Tasks}
\label{Asec:exp:number_tasks}

We report the test power$\pm$standard error of Meta-KL and Meta-MKL when increasing the number of tasks $N$ from $20$ to $150$ in this subsection. Tables~\ref{tab:decreaseN_KL} and \ref{tab:decreaseN_MKL} show that the test power will increase in general when increasing $N$ from $20$ to $150$. When $m_{te}=250$, the lowest test power appears when $N=20$ ($0.333$ for Meta-KL and $0.459$ for Meta-MKL), and the highest test power appears when $N=150$ ($0.771$ for Meta-KL and $0.907$ for Meta-MKL). This means that increasing the number of meta tasks will help improve the test power on the target task.

\begin{table}[!t]
  \centering
  \scriptsize
  \caption{The test power of Meta-KL on the synthetic dataset given very limited training data ($\alpha=0.05$, $m_{tr}=50$) when increasing $N$ from $20$ to $150$. The $m_{te}$ represents number of samples when testing. Bold represents the highest mean per column.}\label{tab:decreaseN_KL}
    \begin{tabular}{cccccccc}
    \toprule
    $m_{te}$  & 50   & 80   & 100  & 120    & 150    & 200 & 250 \\
    \midrule
    $N=20$    & \mnstd{0.095}{0.008} &  \mnstd{0.131}{0.010} &  \mnstd{0.151}{0.013}  &  \mnstd{0.170}{0.018} &  \mnstd{0.212}{0.020}  &  \mnstd{0.269}{0.032}  &  \mnstd{0.333}{0.041}  \\
    $N=50$    & \mnstd{0.121}{0.010} &  \mnstd{0.203}{0.015} &  \mnstd{0.244}{0.019}  &  \mnstd{0.302}{0.022} &  \mnstd{0.368}{0.024}  &  \mnstd{0.523}{0.029}  &  \mnstd{0.650}{0.030}  \\
    $N=80$    & \mnstd{0.144}{0.015} &  \mnstd{0.226}{0.021} &  \mnstd{0.272}{0.030}  &  \mnstd{0.328}{0.033} &  \mnstd{0.416}{0.041}  &  \mnstd{0.551}{0.048}  &  \mnstd{0.659}{0.048}  \\
    $N=100$    & \mnstd{0.146}{0.014} &  \mnstd{0.222}{0.023} &  \mnstd{0.281}{0.030}  &  \mnstd{0.340}{0.034} &  \mnstd{0.424}{0.037}  &  \mnstd{0.556}{0.043}  &  \mnstd{0.677}{0.043}  \\
    $N=120$    & \mnstd{0.131}{0.011} &  \mnstd{0.216}{0.019} &  \mnstd{0.278}{0.023}  &  \mnstd{0.333}{0.025} &  \mnstd{0.422}{0.033}  &  \mnstd{0.565}{0.035}  &  \mnstd{0.692}{0.036}  \\
    $N=150$    & \mnstd{\bf 0.152}{0.010} &  \mnstd{\bf 0.252}{0.016} &  \mnstd{\bf 0.323}{0.021}  &  \mnstd{\bf 0.402}{0.023} &  \mnstd{\bf 0.502}{0.032}  &  \mnstd{\bf 0.656}{0.033}  &  \mnstd{\bf 0.771}{0.029}  \\
    \bottomrule
    \end{tabular}%
\end{table}%

\begin{table}[!t]
  \centering
  \scriptsize
  \caption{The test power of Meta-MKL on the synthetic dataset given very limited training data ($\alpha=0.05$, $m_{tr}=50$) when increasing $N$ from $20$ to $150$. The $m_{te}$ represents number of samples when testing. Bold represents the highest mean per column.}\label{tab:decreaseN_MKL}
    \begin{tabular}{cccccccc}
    \toprule
    $m_{te}$  & 50   & 80   & 100  & 120    & 150    & 200 & 250 \\
    \midrule
    $N=20$    & \mnstd{0.107}{0.008} &  \mnstd{0.148}{0.011} &  \mnstd{0.169}{0.012}  &  \mnstd{0.195}{0.015} &  \mnstd{0.260}{0.020}  &  \mnstd{0.361}{0.020}  &  \mnstd{0.459}{0.033}  \\
    $N=50$    & \mnstd{0.172}{0.010} &  \mnstd{0.262}{0.013} &  \mnstd{0.338}{0.018}  &  \mnstd{0.411}{0.022} &  \mnstd{0.506}{0.026}  &  \mnstd{0.688}{0.029}  &  \mnstd{0.795}{0.024}  \\
    $N=80$    & \mnstd{0.172}{0.013} &  \mnstd{0.294}{0.018} &  \mnstd{0.379}{0.020}  &  \mnstd{0.450}{0.024} &  \mnstd{0.555}{0.026}  &  \mnstd{0.718}{0.029}  &  \mnstd{0.834}{0.022}  \\
    $N=100$    & \mnstd{0.186}{0.011} &  \mnstd{0.321}{0.019} &  \mnstd{0.396}{0.023}  &  \mnstd{0.493}{0.023} &  \mnstd{0.602}{0.027}  &  \mnstd{0.759}{0.027}  &  \mnstd{0.872}{0.021}  \\
    $N=120$    & \mnstd{0.185}{0.010} &  \mnstd{0.331}{0.017} &  \mnstd{0.426}{0.019}  &  \mnstd{0.501}{0.022} &  \mnstd{0.426}{0.023}  &  \mnstd{0.793}{0.017}  &  \mnstd{0.901}{0.011}  \\
    $N=150$    & \mnstd{\bf 0.200}{0.010} &  \mnstd{\bf 0.330}{0.012} &  \mnstd{\bf 0.424}{0.015}  &  \mnstd{\bf 0.520}{0.016} &  \mnstd{\bf 0.641}{0.018}  &  \mnstd{\bf 0.807}{0.016}  &  \mnstd{\bf 0.907}{0.011}  \\
    \bottomrule
    \end{tabular}%
\end{table}%

\begin{table}[!t]
  \centering
  \scriptsize
  \caption{Test power of tests on CIFAR-10 vs CIFAR-10.1 
  given very limited training data ($\alpha=0.05$, $m_{tr}=100,200$). The $m_{te}$ represents number of samples when testing. 
  Bold represents the highest mean per column.}\label{tab:C10vsC10.1_100}%
    \begin{tabular}{c|rrrrrrr}
    \toprule
    \multirow{2}[4]{*}{Methods} & \multicolumn{3}{c}{$m_{tr}=100$} &       & \multicolumn{3}{c}{$m_{tr}=200$} \\
\cmidrule{2-4}\cmidrule{6-8}          & \multicolumn{1}{l}{$m_{te}=200$} & \multicolumn{1}{l}{$m_{te}=500$} & \multicolumn{1}{l}{$m_{te}=900$} &       & \multicolumn{1}{l}{$m_{te}=200$} & \multicolumn{1}{l}{$m_{te}=500$} & \multicolumn{1}{l}{$m_{te}=900$} \\
    \midrule
    ME    &\mnstd{0.084}{0.009}&\mnstd{0.096}{0.016}&\mnstd{0.160}{0.035}&       &\mnstd{0.104}{0.013}&\mnstd{0.202}{0.020}&\mnstd{0.326}{0.039}  \\
    SCF   &\mnstd{0.047}{0.013}&\mnstd{0.037}{0.011}&\mnstd{0.047}{0.015}&       &\mnstd{0.026}{0.009}&\mnstd{0.018}{0.006}&\mnstd{0.026}{0.012}  \\
    C2ST-S &\mnstd{0.059}{0.009}&\mnstd{0.062}{0.007}&\mnstd{0.059}{0.007}&       &\mnstd{0.052}{0.011}&\mnstd{0.054}{0.011}&\mnstd{0.057}{0.008}  \\
    C2ST-L &\mnstd{0.064}{0.009}&\mnstd{0.064}{0.006}&\mnstd{0.063}{0.007}&       &\mnstd{0.075}{0.014}&\mnstd{0.066}{0.011}&\mnstd{0.067}{0.008}  \\
    MMD-O &\mnstd{0.091}{0.011}&\mnstd{0.141}{0.009}&\mnstd{0.279}{0.018}&       &\mnstd{0.084}{0.007}&\mnstd{0.160}{0.011}&\mnstd{0.319}{0.020}  \\
    MMD-D &\mnstd{0.104}{0.007}&\mnstd{0.222}{0.020}&\mnstd{0.418}{0.046}&       &\mnstd{0.117}{0.013}&\mnstd{0.226}{0.021}&\mnstd{0.444}{0.037}  \\
    \midrule
    \midrule
    AGT-KL &\mnstd{0.172}{0.035}&\mnstd{0.465}{0.044}&\mnstd{0.812}{0.033}&       &\mnstd{0.143}{0.021}&\mnstd{0.438}{0.073}&\mnstd{0.836}{0.065}  \\
    Meta-KL &\mnstd{0.173}{0.012}&\mnstd{0.476}{0.015}&\mnstd{0.845}{0.019}&       &\mnstd{0.156}{0.020}&\mnstd{0.458}{0.041}&\mnstd{0.869}{0.021}  \\
    Meta-MKL &\mnstd{\bf 0.187}{0.012}&\mnstd{\bf 0.559}{0.014}&\mnstd{\bf 0.934}{0.006}&       &\mnstd{\bf 0.185}{0.021}&\mnstd{\bf 0.534}{0.026}&\mnstd{\bf 0.943}{0.012}  \\
    \bottomrule
    \end{tabular}%
  
\end{table}%

\begin{table}[!t]
  \centering
  \scriptsize
  \caption{Test power of tests on CIFAR-100 vs CIFAR-10.1  given very limited training data ($\alpha=0.05$, $m_{tr}=100,200$). The $m_{te}$ represents number of samples when testing. 
  Bold represents the highest mean per column.}\label{tab:C100vsC10.1_100}
    \begin{tabular}{c|rrrrrrr}
    \toprule
    \multirow{2}[4]{*}{Methods} & \multicolumn{3}{c}{$m_{tr}=100$} &       & \multicolumn{3}{c}{$m_{tr}=200$} \\
\cmidrule{2-4}\cmidrule{6-8}          & \multicolumn{1}{l}{$m_{te}=200$} & \multicolumn{1}{l}{$m_{te}=500$} & \multicolumn{1}{l}{$m_{te}=900$} &       & \multicolumn{1}{l}{$m_{te}=200$} & \multicolumn{1}{l}{$m_{te}=500$} & \multicolumn{1}{l}{$m_{te}=900$} \\
    \midrule
    ME    &\mnstd{0.211}{0.020}&\mnstd{0.459}{0.045}&\mnstd{0.751}{0.054}&       &\mnstd{0.236}{0.033}&\mnstd{0.512}{0.076}&\mnstd{0.744}{0.090}  \\
    SCF   &\mnstd{0.076}{0.027}&\mnstd{0.132}{0.050}&\mnstd{0.240}{0.095}&       &\mnstd{0.136}{0.036}&\mnstd{0.245}{0.066}&\mnstd{0.416}{0.114}  \\
    C2ST-S &\mnstd{0.064}{0.007}&\mnstd{0.063}{0.010}&\mnstd{0.067}{0.008} &      &\mnstd{0.324}{0.034}&\mnstd{0.237}{0.030}&\mnstd{0.215}{0.023}  \\
    C2ST-L &\mnstd{0.089}{0.010}&\mnstd{0.077}{0.010}&\mnstd{0.075}{0.010}&       &\mnstd{0.378}{0.042}&\mnstd{0.273}{0.032}&\mnstd{0.262}{0.023}  \\
    MMD-O &\mnstd{0.214}{0.012}&\mnstd{0.624}{0.013}&\mnstd{0.970}{0.005} &      &\mnstd{0.199}{0.016}&\mnstd{0.614}{0.017}&\mnstd{0.965}{0.006}  \\
    MMD-D &\mnstd{0.244}{0.011}&\mnstd{0.644}{0.030}&\mnstd{0.970}{0.010} &      &\mnstd{0.223}{0.016}&\mnstd{0.627}{0.031}&\mnstd{0.975}{0.006}  \\
    \midrule
    \midrule
    AGT-KL &\mnstd{0.837}{0.011}&\mnstd{\bf 1.000}{0.000}&\mnstd{\bf 1.000}{0.000} &      &\mnstd{0.876}{0.009}&\mnstd{\bf 1.000}{0.000}&\mnstd{\bf 1.000}{0.000}  \\
    Meta-KL &\mnstd{0.938}{0.016}&\mnstd{\bf 1.000}{0.000}&\mnstd{\bf 1.000}{0.000} &      &\mnstd{0.962}{0.005}&\mnstd{\bf 1.000}{0.000}&\mnstd{\bf 1.000}{0.000}  \\
    Meta-MKL &\mnstd{\bf 0.966}{0.006}&\mnstd{\bf 1.000}{0.000}&\mnstd{\bf 1.000}{0.000} &      &\mnstd{\bf 0.985}{0.005}&\mnstd{\bf 1.000}{0.000}&\mnstd{\bf 1.000}{0.000}  \\
    \bottomrule
    \end{tabular}%
\end{table}%

\subsection{Distinguishing CIFAR-10 or -100 from CIFAR-10.1 Using CIFAR-100-based Meta-tasks}
\label{Asec:exp:CIFAR100}

In this subsection, we report results when meta-samples are generated by the training set of CIFAR-100 dataset, which are shown in \cref{tab:C10vsC10.1_100,tab:C100vsC10.1_100}. It can be seen that our methods still have high test powers compared to previous methods. Besides, we can get higher test power on the task CIFAR-100 vs CIFAR-10.1 compared to results in \cref{tab:C100vsC10.1}, since meta-samples used here are closer to the target task. This phenomenon also appears in \cref{sec:closeness}.

\subsection{Experiments regarding Closeness vs $\gamma$-relatedness}
\label{Asec:relatedness}

In this subsection, we introduce how to estimate the $\gamma$-relatedness between the target task $\gT=(\sP,\sQ)$ and the meta-tasks $\gT^i=(\sP^i,\sQ^i)$. 

\textbf{Estimation of $\gamma$-relatedness.} Let $S_{\sP}$ and $S_{\sP}$ be samples drawn from $\sP$ and $\sQ$, respectively, and let $S_{\sP^i}$ and $S_{\sP^i}$ be samples drawn from $\sP^i$ and $\sQ^i$, respectively. Then, we split $S_{\sP}$ into $S_{\sP}^{tr}\cup S_{\sP}^{te}$, and $S_{\sQ}$ into $S_{\sQ}^{tr}\cup S_{\sQ}^{te}$, and $S_{\sP^i}$ into $S_{\sP^i}^{tr}\cup S_{\sP^i}^{te}$, and $S_{\sQ^i}$ into $S_{\sQ^i}^{tr}\cup S_{\sQ^i}^{te}$. Let the deep kernel $k$ have the form \eqref{eq:deepkernel_simpleForm}. Next, following \cref{def:relatedness} and \cite{liu2020learning}, we find a kernel trying to achieve the maximum in $\gamma$ as
\begin{align}
    \hat{k} = \argmax_{k} \left( \hat{J}(S_{\sP}^{tr} , S_{\sQ}^{tr} ; k) - \hat{J}(S_{\sP^i}^{tr} , S_{\sQ^i}^{tr} ; k) \right)^2
.\end{align}
Based on \cref{def:relatedness}, we can estimate the $\gamma_i$ between $\gT$ and $\gT^i$ as follows.
\begin{equation}
    \hat{\gamma}_i = |\hat{J}(S_{\sP}^{tr} , S_{\sQ}^{tr} ; \hat{k}) - \hat{J}(S_{\sP^i}^{tr} , S_{\sQ^i}^{tr} ; \hat{k})|.
\end{equation}
To try to avoid the local maximum during the the above maximizing process, we will repeat the above optimization procedure $10$ times for estimating $\hat{\gamma}_i$. Namely, we have $10$ values $\{\hat{\gamma}_{it}\}_{t=1}^{10}$ for $\hat{\gamma}_i$.
Hence, the estimated $\gamma$ between $\gT$ and $\{\gT^i\}_{i=1}^N$ is set to $\hat{\gamma}=\min_i \max_t \hat{\gamma}_{it}$. 

\textbf{Closeness vs $\gamma$-relatedness.} Given the target task $\mathcal{T}$ in synthetic datasets, in this experiment, we set $|S_{\sP}^{tr}|=|S_{\sP}^{te}|=|S_{\sQ^i}^{tr}|=|S_{\sQ^i}^{te}|=4,000$ and define tasks $\bm{T}$ with closeness $C$ as
\begin{align}
    \bm{T}(C)=\{\mathcal{T}^i:=(\P,\Q((0.6-C)+0.1\times i/N))\}_{i=1}^{N}.
\end{align}
 It is clear that $\bm{T}(0)$ will contain our target task $\mathcal{T}$ (i.e., the closeness is zero). Then, we estimate the $\gamma$-relatedness between the target task and $\bm{T}(C)$, where $C\in\{0.1,0.2,0.3,0.4,0.5\}$, and the results show that $\gamma\propto C$. Specifically, if we let $C$ be $0.1,0.2,0.3,0.4,0.5$, then the $\hat{\gamma}$ is $0.035, 0.067, 0.076, 0.104, 0.134$, respectively.

\end{document}